\documentclass{article}

\usepackage{soul}

\usepackage{calculation}

\IfFileExists{iclr2026_conference.sty}{
  \usepackage{iclr2026_conference,times}
}{
  \usepackage{times}
  \usepackage[margin=1in]{geometry}
  \usepackage[numbers]{natbib}
  \newcommand{\iclrfinalcopy}{}
}

\usepackage{amsmath}

\DeclareMathOperator{\acos}{acos}

\usepackage{comment}
\usepackage{microtype}
\usepackage{graphicx}
\usepackage{subcaption}
\usepackage{booktabs}
\usepackage{amsmath,amssymb,amsthm}
\usepackage{mathtools}
\usepackage{url}
\usepackage{hyperref}
\usepackage{algorithm}
\usepackage{algpseudocode}

\usepackage{tikz}
\usetikzlibrary{angles,quotes}

\title{GSVD for Geometry-Grounded Dataset Comparison: An Alignment Angle Is All You Need}

 \newcommand{\Ker}{\mathrm{Ker}}

 \author{
   Arthur Sobrinho \\
   Institute of Computing, UFRJ \\
   \texttt{arthursfr@ic.ufrj.br}
   \And
   Eduarda Marques \\
  Institute of Computing, UFRJ \\
   \texttt{eduardasm@ic.ufrj.br}
   \And
   Jo\~ao Paix\~ao \\
   Institute of Computing, UFRJ \\
   \texttt{jpaixao@ic.ufrj.br}
   \And
   Daniel S. Menasche \\
   Institute of Computing, UFRJ \\
   \texttt{sadoc@ic.ufrj.br}
   \And
   Heudson Mirandola \\
   Institute of Mathematics, UFRJ \\
   \texttt{mirandola@im.ufrj.br}
 }
\iclrfinalcopy

\newcommand{\R}{\mathbb{R}}
\newcommand{\col}{\operatorname{col}}

\newcommand{\argmin}{\operatorname*{arg\,min}}
\newcommand{\argmax}{\operatorname*{arg\,max}}
\newcommand{\norm}[1]{\left\lVert #1 \right\rVert}

\newcommand{\safeincludegraphics}[2][]{%
  \IfFileExists{#2}{\includegraphics[#1]{#2}}{\fbox{\scriptsize Missing file: \texttt{\detokenize{#2}}}}%
}

\theoremstyle{definition}
\newtheorem{definition}{Definition}
\newtheorem{lemma}{Lemma}
\newtheorem{theorem}{Theorem}
\newtheorem{remark}{Remark}

\begin{document}

\maketitle

\begin{abstract}
Geometry-grounded learning asks models to respect structure in the problem domain rather than treating observations as arbitrary vectors. Motivated by this view, we revisit a classical but underused primitive for comparing datasets: \emph{linear relations} between two data matrices, expressed via the co-span constraint $Ax=By=z$ in a shared ambient space.

To operationalize this comparison, we use the generalized singular value decomposition (GSVD) as a joint coordinate system for two subspaces. In particular, we exploit the GSVD form
$A = H C U$, $B = H S V$ with $C^\top C + S^\top S = I$,
which separates shared versus dataset-specific directions through the diagonal structure of $(C,S)$.
From these factors we derive an interpretable \emph{angle score} $\theta(z)\in[0,\pi/2]$ for a sample $z$, quantifying whether $z$ is explained relatively more by $A$, more by $B$, or comparably by both. 

The primary role of $\theta(z)$ is as a \emph{per-sample geometric diagnostic}.  We illustrate the behavior of the score on MNIST through angle distributions and representative GSVD directions. A binary classifier derived from $\theta(z)$ is presented as an illustrative application of the score as an interpretable diagnostic tool.

\end{abstract}

\section{Introduction}
Comparing datasets is a recurring problem across machine learning and data analysis.
It arises when assessing dataset shift between training and deployment~\cite{sugiyama2007covariate}, when comparing representations learned by different models~\citep{huang2007kmm}, and when diagnosing similarities and differences between classes or domains~\citep{morcos2018pwcca}.
While modern practice often compares datasets indirectly (via trained models or embedding distances), such procedures can obscure \emph{why} two datasets appear similar or different.

We take a complementary view: compare datasets through their \emph{geometry}.
Many real-world datasets concentrate near low-dimensional structures, exhibit partially shared latent factors, and contain directions that are specific to one domain or class.
When datasets live in a common ambient feature space (raw pixels, sensor measurements, or fixed embeddings), their geometric relationship becomes meaningful in its own right.

\paragraph{A relation primitive.}
Let $A=[a_1,\dots,a_p]\in\R^{d\times p}$ and $B=[b_1,\dots,b_q]\in\R^{d\times q}$ denote two dataset matrices (columns are observations in $\R^d$).
We characterize similarity through the linear relation
\begin{equation}
Ax = By = z,
\label{eq:intro_cospan}
\end{equation}
where $x\in\R^p$ and $y\in\R^q$ are coefficients and $z\in\R^d$ is a shared ambient vector.
We treat \eqref{eq:intro_cospan} as a primitive object: it encodes compatibility in the ambient space without requiring pointwise correspondences between samples or an invertible mapping between domains.

\paragraph{GSVD yields a joint frame.}
A classical tool for paired subspace analysis is GSVD~\citep{edelman2020gsvd}.
Given $A$ and $B$ with the same row dimension $d$, one common GSVD form provides
\begin{equation}
A = H C U,\qquad
B = H S V,\qquad
C^\top C + S^\top S = I,
\label{eq:intro_gsvd}
\end{equation}
where $H\in\R^{d\times d}$ is invertible (or left-invertible in reduced variants),
$U\in\R^{p\times p}$ and $V\in\R^{q\times q}$ are orthogonal,
and $C$ and $S$ are diagonal or block-diagonal with nonnegative entries.
Intuitively, $H$ defines a shared ambient reference frame, while $(C,S)$ encode how strongly each shared direction contributes to $A$ versus to $B$.

\paragraph{A single alignment angle.}
Our goal is to summarize the relative explanatory power of $A$ versus $B$ for a sample $z$ by a minimal, interpretable quantity.
We define an \emph{alignment angle} $\theta(z)\in[0,\pi/2]$ derived from the GSVD frame: $\theta(z)\approx 0$ means ``more $A$'', $\theta(z)\approx\pi/2$ means ``more $B$'', and $\theta(z)\approx\pi/4$ indicates shared structure.

\paragraph{Contributions.} We summarize our contributions as follows:
\begin{enumerate}
  \item We propose linear relations in the co-span form $Ax=By=z$ as a minimal, geometry-grounded primitive for dataset comparison.
  \item We use GSVD as a natural joint coordinate system for comparing two subspaces, making shared vs.\ dataset-specific directions explicit via $(C,S)$.
  \item We derive an interpretable angle score $\theta(z)$ that quantifies a per-sample diagnostic of relative dataset alignment and supports a binary classification.
  \item We illustrate the geometric behavior of $\theta(z)$ on MNIST through angle distributions and GSVD-derived representative directions.

\end{enumerate}

\paragraph{Organization.}
The remainder of the paper is organized as follows.
Section~\ref{sec:related} reviews related work, and Section~\ref{sec:background} introduces the background on linear relations and GSVD.
Section~\ref{sec:theta} defines the relative alignment angle $\theta(z)$, explains how to compute it efficiently in the GSVD frame, and describes downstream uses such as classification and diagnostics.
Section~\ref{sec:exp} reports experiments on MNIST, including class-conditional angle distributions and representative GSVD directions.
Section~\ref{sec:disc} discusses additional geometric interpretations, followed by conclusions in Section~\ref{sec:concl}.
The appendices provide additional derivations and supporting GSVD properties, including the span-view interpretation and proofs of the main technical results.

\section{Related work} \label{sec:related}
Our work sits at the intersection of (i) classical subspace comparisons and multivariate statistics, (ii) instance alignment objectives, and (iii) modern representation-similarity and transfer-learning perspectives.

\textbf{Subspace geometry and joint decompositions.}
SVD provides canonical coordinates for a single matrix and underpins least-squares and low-rank approximation \citep{golub2013matrix,stewart1993svd}. 
To compare two subspaces, principal angles quantify overlap via the singular values of $Q_A^\top Q_B$ for orthonormal bases $Q_A,Q_B$ \citep{bjorck1973angles,knyazev2002angles}, and Canonical Correlation Analysis (CCA) identifies maximally correlated projections between variable sets \citep{hotelling1936cca}. 
GSVD generalizes SVD to matrix pairs \citep{vanloan1976gsvd,paige1981gsvd}, producing a joint frame $A=HCU$, $B=HSV$ where the diagonal factors $(C,S)$ expose shared versus dataset-specific directions and encode principal-angle-like structure through $(c_i,s_i)$. 
We use this joint frame as an interpretable comparative coordinate system and derive from it a per-sample alignment angle $\theta(z)$. 
Whereas principal angles describe geometry between subspaces, $\theta(z)$ provides a complementary \emph{sample-level diagnostic} indicating whether an observation aligns more strongly with $A$, $B$, or the shared structure.

\textbf{Instance alignment and Procrustes.}
When sample correspondences are available (or assumed), alignment is often formulated as minimizing an instance-wise discrepancy under a restricted transformation family. Procrustes analysis is the canonical example, aligning two point clouds by an orthogonal (or similarity) transform \citep{gower1975procrustes}. In many dataset-comparison settings, however, correspondences are ambiguous or unreliable, motivating our emphasis on relations and subspace geometry. Still, instance alignment is complementary to our approach: the GSVD angles provide a principled way to downweight weakly shared directions before performing alignment (Appendix~\ref{app:span_view}).

\textbf{Representation similarity and dataset distances.}
Recent work compares learned representations using invariances to re-parameterizations. SVCCA and PWCCA adapt CCA to compare neural representations through truncated subspaces and weighting schemes \citep{raghu2017svcca,morcos2018pwcca}, while CKA provides a robust similarity index widely used for layerwise comparisons \citep{kornblith2019cka,cortes2012cka}. In parallel, distributional comparators such as MMD \citep{gretton2012mmd} and optimal transport \citep{peyre2019ot} measure dataset discrepancy directly, and FID is a practical feature-statistics score in generative modeling \citep{heusel2017fid}. These approaches often output a \emph{single} similarity value (or a layerwise matrix) and prioritize invariance or distribution matching. Our goal is complementary: we extract an explicit shared frame (via GSVD), interpretable representative directions, and a  \emph{per-sample} diagnostic $\theta(z)$.

\textbf{Instance-based transfer learning.}
A different but related viewpoint is transfer learning by \emph{reweighting} or \emph{selecting} source instances to better match a target domain. Surveyed in \citet{pan2010transfer}, this line includes boosting-style methods such as TrAdaBoost \citep{zheng2020improved} and covariate-shift / importance-weighting approaches \citep{sugiyama2007covariate,huang2007kmm}. Our angle score $\theta(z)$ plays a conceptually similar role as a \emph{relative compatibility} signal: it can be used to flag samples that are more consistent with one dataset than the other, which is useful for auditing, filtering, or prioritizing instances when comparing or transferring between domains.

\section{Background: Linear relations and GSVD} \label{sec:background}

This section presents two equivalent views of linear relations: the \emph{co-span} formulation, which underlies the rest of the paper, and the \emph{span} formulation, developed further in Appendix~\ref{app:span_view}. We also introduce GSVD as a joint coordinate framework for expressing relations between datasets, highlighting the block structure of $(C,S)$ that separates shared from dataset-specific directions.

\subsection{Linear relations: co-span vs.\ span}
A linear relation between the two subspaces can be formalized as
\begin{equation}
\mathcal{R}_{\text{co-span}} \;=\; \{(x,y)\in\R^p\times\R^q : Ax = By\}.
\label{eq:cospan_relation}
\end{equation}
Here, $x$ and $y$ denote coordinate vectors, and the constraint imposes that both sides yield the \emph{same} ambient vector $z := Ax = By \in \R^d$. From a geometric perspective, this encodes the intersection (and approximate intersection) structure between $\col(A)$ and $\col(B)$, which denote the column spaces of $A$ and $B$, respectively; it can be naturally interpreted as an \emph{alignment of dimensions} by enforcing representability in a shared ambient space.  Appendix~\ref{app:notation} summarizes the notation used throughout the paper.


Equivalently, one may introduce an explicit shared latent parameter $w$:
\begin{equation}
\mathcal{R}_{\text{span}} \;=\; \{(x,y) : \exists\, w \text{ s.t. } x = F w,\;\; y = G w\},
\label{eq:span_relation}
\end{equation}
for suitable linear maps $F,G$. This view is closer to \emph{instance alignment}, but in many dataset-comparison settings correspondences are unreliable, motivating the relation-first view \eqref{eq:cospan_relation}.

\subsection{GSVD as a joint geometry}

To compare two subspaces simultaneously, GSVD produces a \emph{joint coordinate system}. In one common form, given matrices $A$ and $B$, GSVD produces the decomposition given by~\eqref{eq:intro_gsvd}. 
Intuitively, $H$ defines a shared ambient reference frame, while the diagonal factors $C^{d\times p}$ and $S^{d\times q}$ quantify how strongly each shared direction contributes to $A$ and $B$. In particular, the diagonal (or block-diagonal) factors $C$ and $S$ are ordered so that entries of $C$ decrease and entries of $S$ increase, making ``more-$A$'' versus ``more-$B$'' directions explicit.

\textbf{Block structure of $C$ and $S$.}\label{def:gsvd_block}
In the GSVD form \eqref{eq:intro_gsvd}, the diagonal (or block-diagonal) factors $C$ and $S$
can be written by isolating a central diagonal block (the shared subspace) and padding
with identity/zero blocks:
\begin{equation}
C \;=\;
\begin{bmatrix}
I_{r} & 0 & 0 \\
0 & \widetilde C & 0 \\
0 & 0 & 0_{t}
\end{bmatrix},
\qquad
S \;=\;
\begin{bmatrix}
0_{r} & 0 & 0 \\
0 & \widetilde S & 0 \\
0 & 0 & I_{t}
\end{bmatrix},
\label{eq:cs_blocks}
\end{equation}
where $I_r$ and $I_t$ are identity matrices, $0_r$ and $0_t$ are zero matrices,
and $\widetilde C,\widetilde S \in \mathbb{R}^{k\times k}$ are diagonal with strictly positive entries.
The sizes satisfy $r+k+t=d$.

The $C$ and $S$ elements are ordered as follows:
\begin{align*}
 c_{11} \geq  c_{22} \geq \dots \geq  c_{dd},\\
 s_{11} \leq  s_{22} \leq \dots \leq  s_{dd}.
\end{align*}

\textbf{Shared vs.\ specific directions.}
Directions where a diagonal entry of $C$ dominates correspond to geometry primarily explained by $A$; directions where $S$ dominates are primarily explained by $B$; comparable magnitudes indicate shared structure.

\section{A relative alignment angle $\theta(z)$ and its uses}
\label{sec:theta}

This section defines the alignment angle associated with a fixed instance, presents an algorithm for its computation, characterizes the instances that attain its maximum and minimum values, and presents two downstream applications.

\subsection{Defining the Alignment Angle}
We now turn the relation between two datasets $A$ and $B$ into a single, interpretable score.
For a given $z \in \operatorname{col}(A)\cap \operatorname{col}(B)$, i.e., for which the
co-span relation $Ax = By = z$ is feasible, define the fiber
\begin{equation}
\mathcal{R}_{\text{co-span}}(z)
\;=\;
\{(x,y)\in\R^p\times\R^q : Ax = By = z\}.
\label{eq:cospan_fiber}
\end{equation}

If $z$ does not lie in this intersection, the relation is undefined and the
sample is considered \emph{non-relational} with respect to $(A,B)$.

\begin{definition}[Alignment angle] \label{eq:keydefinition} The alignment angle  of $z$ given   matrices $A$ and $B$   is defined as
\begin{equation}
\theta(z)
\;:=\;
\arctan\!\left(\frac{\|x\|_2}{\|y\|_2}\right) \in\;
\bigl[0,\tfrac{\pi}{2}\bigr],  
(x,y)\in\mathcal{R}_{\text{co-span}}(z), x\perp \Ker(A), y \perp \Ker(B).
\label{eq:theta_def}
\end{equation}
\end{definition}
\paragraph{Interpretation.}
$\theta(z)\approx 0$ means $z$ is explained by $A$ with smaller coefficient norm than by $B$ (``more $A$'');
$\theta(z)\approx\pi/2$ means the symmetric situation (``more $B$'');
$\theta(z)\approx\pi/4$ indicates comparable explanatory strength (shared structure).

The constraints $x\perp\Ker(A)$ and $y\perp\Ker(B)$ pick the minimum-$\ell_2$ norm coefficients among all pairs satisfying $Ax=By=z$.
Hence $\|x\|_2$ and $\|y\|_2$ are canonical ``costs'' to represent the same $z$ using $A$ versus $B$, and the ratio $\|x\|_2/\|y\|_2$ compares these costs.
Using $\arctan$ maps this positive ratio to a bounded, symmetric score in $[0,\pi/2]$, with equal costs $\|x\|_2=\|y\|_2$ giving $\theta=\pi/4$.

\subsection{Computing $\theta(z)$ in the GSVD frame} \label{sec:gsvdmain}

Next, we express the alignment score~\eqref{eq:keydefinition} in terms of the GSVD factors, obtaining a closed-form expression that is easy to compute and geometrically interpretable: $H$ defines a shared ambient basis, while $(C,S)$ weight the relative contributions of $A$ and $B$ along each direction.

Let $(H,C,S,U,V)$ denote the GSVD factors in \eqref{eq:intro_gsvd}.
For an arbitrary sample $z\in\R^d$, compute its coordinates in the shared frame by
\begin{equation}
c(z)\;:=\;\argmin_{c\in\R^d}\;\norm{Hc-z}_2^2
\qquad\Rightarrow\qquad
c(z)=H^\dagger z.
\label{eq:c_def}
\end{equation}
Define the GSVD-weighted costs
\begin{equation}
a(z):=\norm{C^\dagger c(z)}_2,\qquad
b(z):=\norm{S^\dagger c(z)}_2,
\label{eq:ab_def}
\end{equation}
where $\dagger$ denotes the Moore--Penrose pseudoinverse (optionally truncated for numerical stability).
Because $U$ and $V$ are orthogonal, these costs correspond to the norms of canonical coefficient choices in $\mathcal{R}_{\text{co-span}}(z)$ (up to the same orthogonal transforms).

\begin{theorem}[GSVD produces alignment angles]   
\begin{equation}
\theta(z)=\arctan\!\left(\frac{a(z)}{b(z)}\right)
=
\arctan\!\left(
\frac{\norm{C^\dagger\,c(z)}_2}{\norm{S^\dagger\,c(z)}_2}
\right),
\end{equation}
where  $c(z)=H^\dagger z$  
and  $(H,C,S,U,V)$ are  the GSVD factors in~\eqref{eq:intro_gsvd}.
\end{theorem}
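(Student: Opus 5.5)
Fix $z\in\col(A)\cap\col(B)$. The plan is to reduce the minimum-norm conditions in Definition~\ref{eq:keydefinition} to conditions in the GSVD frame, solve explicitly for $(x,y)$, and read off the norms.

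\textbf{Change of variables.} Substitute $A=HCU$ and $B=HSV$, and set $\xi:=Ux$, $\eta:=Vy$. Since $U$ and $V$ are orthogonal, $\|x\|_2=\|\xi\|_2$ and $\|y\|_2=\|\eta\|_2$. Because $H$ is invertible (or left-invertible with $z$ in its range), $Ax=z$ is equivalent to $C\xi=H^\dagger z=c(z)$, and $By=z$ is equivalent to $S\eta=c(z)$.

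\textbf{Kernel conditions.} Next I translate the constraints $x\perp\Ker(A)$ and $y\perp\Ker(B)$. We have $\Ker(A)=U^\top\Ker(C)$, because $H$ is injective. So $x\perp\Ker(A)$ iff $\xi\perp\Ker(C)$, i.e.\ $\xi\in\operatorname{row}(C)=\col(C^\top)$. Similarly, $y\perp\Ker(B)$ iff $\eta\in\col(S^\top)$.

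\textbf{Explicit solution.} Now $\xi$ is the unique solution of $C\xi=c(z)$ lying in $\col(C^\top)$. By the defining property of the Moore--Penrose pseudoinverse, this is $\xi=C^\dagger c(z)$, provided $c(z)\in\col(C)$. Likewise $\eta=S^\dagger c(z)$, provided $c(z)\in\col(S)$. Hence $\|x\|_2=a(z)$ and $\|y\|_2=b(z)$, and applying $\arctan$ to the ratio gives the formula in the statement.

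\textbf{Consistency (main obstacle).} The delicate step is showing that $c(z)\in\col(C)\cap\col(S)$ when $z\in\col(A)\cap\col(B)$. I would argue directly from the block structure \eqref{eq:cs_blocks}. Since $C\xi=c(z)$ is solvable, the coordinates of $c(z)$ in the last $t$ block vanish (rows of $0_t$). Since $S\eta=c(z)$ is solvable, the coordinates in the first $r$ block vanish. So $c(z)$ is supported on the middle $k$ block, where $\widetilde C$ and $\widetilde S$ are invertible.

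Consequently $a(z)=\|\widetilde C^{-1}c_{\mathrm{mid}}\|_2$ and $b(z)=\|\widetilde S^{-1}c_{\mathrm{mid}}\|_2$. Moreover $b(z)>0$ unless $z=0$, so the ratio is well defined; the case $z=0$ gives a $0/0$ that I would treat by convention.

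\textbf{Caveats.} I would also note two points.
\begin{itemize}
\item The uniqueness of the minimum-norm pair justifies the implicit ``the'' in the definition of $\theta$.
\item For rectangular $C$ and $S$ (sizes $d\times p$ and $d\times q$), the zero-padding rows and columns do not affect these pseudoinverse identities.
\end{itemize}
For $z$ outside the intersection, the theorem's right-hand side serves as the natural extension, but the identity is claimed only for relational $z$.
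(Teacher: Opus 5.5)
Your proof is correct and shares the paper's skeleton: cancel the left-invertible $H$, pass to $\tilde x=Ux$, $\tilde y=Vy$, translate the kernel conditions into the GSVD frame, and use orthogonality of $U,V$ to transfer norms. You handle the central step differently, though.

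The paper argues block by block:
\begin{enumerate}
\item Lemma~\ref{lem:implicit-explicit} forces $\tilde x_1=c_1=0$ and $\tilde y_3=c_3=0$.
\item Lemma~\ref{lem:null-orth} removes the free blocks $\tilde x_3,\tilde y_1$.
\item The shared block is solved as $\tilde x_2=\widetilde C^{-1}c_2$, $\tilde y_2=\widetilde S^{-1}c_2$.
\item Only at the end are these norms matched with $\|C^\dagger c\|_2$ and $\|S^\dagger c\|_2$, with an informal appeal to the pseudoinverse for uniqueness.
\end{enumerate}
You instead identify $x\perp\Ker(A)$ with $\xi\in\col(C^\top)$ and invoke the minimal-norm property of the Moore--Penrose inverse. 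Then $\xi=C^\dagger c(z)$ and $\eta=S^\dagger c(z)$ follow directly, so the pseudoinverses in the statement appear for a structural reason rather than by matching blocks.

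Your route buys two things. It is shape-agnostic: the paper's block lemmas split $\tilde x\in\R^p$ into pieces of sizes $r,k,t$ with $r+k+t=d$, which fits only when $p=q=d$. It also settles uniqueness of the minimal-norm pair and the degenerate case $z=0$, which the paper leaves implicit. The paper's route, in exchange, puts the shared-block formula $\|\widetilde C^{-1}c_2\|_2/\|\widetilde S^{-1}c_2\|_2$ front and center, and this is what Theorem~\ref{thm:ratio-max} reuses. You recover that formula anyway in your consistency step.

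Finally, what you call the main obstacle is immediate. Solvability of $C\xi=c(z)$ is exactly the statement $c(z)\in\col(C)$. Likewise, $b(z)>0$ for $z\neq 0$ follows from $SS^\dagger c(z)=c(z)\neq 0$. So the block structure is not needed for the identity itself.
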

\begin{proof} See Appendix~\ref{sec:appanalytproof}.
\end{proof}

\subsection{Finding Extreme Directions}
\label{sec:optimization}

So far, the pipeline is \emph{forward}: given a sample $z$, we compute $\theta(z)$ as a relative-alignment diagnostic.
We can also ask the inverse question: \emph{which vectors $z$ are maximally $A$-like or maximally $B$-like under the GSVD geometry?}
Equivalently, we seek
\begin{equation}
z_{\max}\in\argmax_{z\in\mathcal{Z}}\ \theta(z),
\qquad
z_{\min}\in\argmin_{z\in\mathcal{Z}}\ \theta(z), \label{eq:optmproblem}
\end{equation}
over a constraint set $\mathcal{Z}$ (e.g., $\|z\|_2=1$, or $z$ restricted to the span of training data).
This viewpoint produces representative ``extreme'' directions for visualization and diagnostics, complementing the per-sample scoring setting of Section~\ref{sec:gsvdmain}.

In what follows, we characterize $z_{\textrm{max}}$ and $z_{\textrm{min}}$  in terms of the GSVD structure. In particular, $z_\text{max}$ and $z_\text{min}$ correspond to specific columns of the shared matrix $H$: namely, $h_{r+k}$ and $h_{r+1}$ respectively. 
These vectors represent the last and first shared generalized components of the datasets, where “shared” in this instance refers to components associated with indices for which both diagonal matrices $C$ and $S$ have nonzero entries. As such the GSVD provides not only the coordinate system used to calculate \eqref{eq:theta_def}, but that same basis encodes the solution for  optimization problem~\eqref{eq:optmproblem}.

Let $\tilde{x}=(\tilde x_1, 
\tilde x_2, 
\tilde x_3)$ and $\tilde{y}=(\tilde y_1, 
\tilde y_2, 
\tilde y_3)$, where the dimensions of the components of $\tilde{x}$ and  $\tilde{y}$  correspond to the blocks of $C$ and $S$ given by~\eqref{eq:cs_blocks}, i.e.,  $\tilde{x}_1 \in \mathbb{R}^{r}$, $\tilde{x}_2 \in \mathbb{R}^{k}$ and $\tilde{x}_3 \in \mathbb{R}^{t}$. Similarly,  $\tilde{y}_1 \in \mathbb{R}^{r}$, $\tilde{y}_2 \in \mathbb{R}^{k}$ and $\tilde{y}_3 \in \mathbb{R}^{t}$. 
\begin{theorem}[GSVD produces extreme directions] \label{thm:ratio-max}
Given datasets $A$ and $B$, the extreme directions $z_{\textrm{max}}$ and $z_{\textrm{min}}$
that maximize and minimize the alignment angle $\theta(z)$, respectively, are unique and given by
$z_{\textrm{max}}=Ax_{\textrm{max}}=By_{\textrm{max}}$ and
$z_{\textrm{min}}=Ax_{\textrm{min}}=By_{\textrm{min}}$, where
$z_{\textrm{max}} = h_{r+k}$, $z_{\textrm{min}} = h_{r+1}$ and
\[
x_{\textrm{max}} =
U^T\begin{bmatrix}
0\\
\widetilde S\widetilde C^{-1}e_{k}\\
0
\end{bmatrix},
\quad
y_{\textrm{max}} =
V^T\begin{bmatrix}
0\\
e_{k}\\
0
\end{bmatrix},
\quad
x_{\textrm{min}} =
U^T\begin{bmatrix}
0\\
\widetilde C\widetilde S^{-1}e_{1}\\
0
\end{bmatrix},
\quad
y_{\textrm{min}} =
V^T\begin{bmatrix}
0\\
 e_{1}\\
0
\end{bmatrix}.
\]
$(H,C,S,U,V)$ are the GSVD factors in~\eqref{eq:intro_gsvd}, $e_i$ is the $i$-th canonical basis vector,
i.e., $(e_i)_j = 1$ if $j=i$ and $(e_i)_j = 0$ otherwise, and $h_i$ is the $i$-th column of matrix $H$.
\end{theorem}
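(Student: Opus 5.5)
The plan is to reduce everything to the middle block of the GSVD, where the problem becomes a diagonal Rayleigh quotient. Fix $z\in\col(A)\cap\col(B)$ and write $z=Hw$, with $w=H^{-1}z$ (using invertibility of $H$). Put $\tilde x=Ux$ and $\tilde y=Vy$. Then $Ax=By=z$ is equivalent to
\[
C\tilde x = w = S\tilde y .
\]
Splitting into the blocks of \eqref{eq:cs_blocks}, the identity/zero pattern forces the following:
\begin{itemize}
\item the zero block $0_t$ of $C$ gives $w_3=0$;
\item the zero block $0_r$ of $S$ gives $w_1=0$;
\item consequently $\tilde x_1=w_1=0$ and $\tilde y_3=w_3=0$;
\item on the middle block, $w_2=\widetilde C\tilde x_2=\widetilde S\tilde y_2$.
\end{itemize}
So every related $z$ has the form $z=H(0,w_2,0)^\top$, i.e.\ $z$ lies in the span of $h_{r+1},\dots,h_{r+k}$.

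Next I impose $x\perp\Ker(A)$ and $y\perp\Ker(B)$. Since $U$ and $V$ are orthogonal, these become $\tilde x\perp\Ker(C)$ and $\tilde y\perp\Ker(S)$. This kills $\tilde x_3$, $\tilde y_1$, and any coordinates of $\tilde x,\tilde y$ beyond the $d$ diagonal positions (when $p,q>d$). I will check this bookkeeping carefully, since $C$ and $S$ are $d\times p$ and $d\times q$. The result is
\[
\|x\|_2=\|\widetilde C^{-1}w_2\|_2,\qquad \|y\|_2=\|\widetilde S^{-1}w_2\|_2,
\]
consistent with the previous theorem.

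Substituting $u=\widetilde S^{-1}w_2$ gives
\[
\tan^2\theta(z)=\frac{\sum_{i=1}^k (\tilde s_i/\tilde c_i)^2u_i^2}{\sum_{i=1}^k u_i^2},
\]
which is a Rayleigh quotient of $D=\operatorname{diag}(\tilde s_i/\tilde c_i)$. The ordering of $C$ (decreasing) and $S$ (increasing) makes $\tilde s_i/\tilde c_i$ nondecreasing in $i$. Since $\arctan$ is monotone, $\theta$ is maximized at $u\propto e_k$ and minimized at $u\propto e_1$.

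Unwinding the substitutions at the maximizer gives:
\begin{itemize}
\item $\tilde y_2=e_k$, so $y_{\max}=V^\top(0,e_k,0)^\top$;
\item $\tilde x_2=\widetilde C^{-1}\widetilde S e_k=\widetilde S\widetilde C^{-1}e_k$ (diagonal matrices commute), so $x_{\max}$ is as stated;
\item $z=H(0,\widetilde S e_k,0)^\top=\tilde s_k\,h_{r+k}$.
\end{itemize}
The minimizer is symmetric with $e_1$, giving $z=\tilde s_1\,h_{r+1}$.

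The main obstacle is the uniqueness claim and the normalization. $\theta$ is invariant under rescaling of $z$, so uniqueness can only hold up to scale (or after fixing a constraint set $\mathcal Z$ and a normalization). The computation produces $z$ as a multiple of $h_{r+k}$ (resp.\ $h_{r+1}$); the stated formulas equal $\tilde s_k h_{r+k}$ (resp.\ $\tilde s_1 h_{r+1}$), matching the statement only up to scale. Uniqueness also requires strict separation of the extreme generalized values, $\tilde s_k/\tilde c_k>\tilde s_{k-1}/\tilde c_{k-1}$ and $\tilde s_1/\tilde c_1<\tilde s_2/\tilde c_2$. Otherwise the maximizer set is the whole eigenspace of $D$ for the top (resp.\ bottom) value. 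I would state these two provisos explicitly and prove uniqueness of the direction under them, via the standard equality case of the Rayleigh quotient.
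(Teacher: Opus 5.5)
Your proposal is correct and follows essentially the same route as the paper's proof. Both reduce via the block structure of $(C,S)$ and kernel orthogonality (the paper's Lemmas~\ref{lem:implicit-explicit} and~\ref{lem:null-orth}) to the middle block, take $\tilde y_2$ as the free variable to obtain a Rayleigh quotient of the diagonal operator $\widetilde S\widetilde C^{-1}$, read off $e_k$ and $e_1$, and unwind to $\tilde s_k h_{r+k}$ and $\tilde s_1 h_{r+1}$. Your provisos are well founded: the paper's own computation also lands on $Ax_{\max}=By_{\max}=\tilde s_k h_{r+k}$ and then simply ``normalizes.'' Its uniqueness claim also tacitly needs the extreme ratios $\tilde s_k/\tilde c_k$ and $\tilde s_1/\tilde c_1$ to be strictly separated from the others, and even then it holds only up to nonzero scaling of $z$.
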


\begin{proof}
See Appendix~\ref{sec:appanalytproof}.
\end{proof}


\begin{remark}[Deflation / subsequent directions] \label{def:deflation}
A standard way to obtain subsequent maximizers is to augment the optimization problem underlying
Theorem~\ref{thm:ratio-max} with the additional constraint that $x$ be orthogonal to all previously
computed maximizers.
Let $X_{\text{prev}}$ denote the matrix whose columns are the previously obtained maximizers, and
impose the constraint $X_{\text{prev}}^{\top}x=0$ in the maximization problem.
This procedure is commonly referred to as \emph{deflation}, in analogy with extracting successive
principal components in PCA.
An explicit orthogonality constraint on $y$ is unnecessary, since its orthogonality is induced by
that of $x$.
For further details, we refer the reader to~\citep{chu1997variational}.
\end{remark}

\subsection{Downstream uses: diagnostics and illustrative classification}
\label{sec:uses}
\textbf{Binary classification (relative geometry).}\label{uses_binary}
Given two domains (e.g., digit ``1'' vs.\ digit ``5''), we build matrices $A$ and $B$ from their training samples (or low-rank bases thereof). For a test sample $z$, compute $\theta(z)$ and predict
\[
\hat{\ell}(z)=
\begin{cases}
A, & \theta(z)\le \tau,\\
B, & \theta(z)>\tau,
\end{cases}
\]
with $\tau=\pi/4$ as a symmetry default, or tuned on validation data.  The pseudo-code of the classifier is presented in Appendix~\ref{app:algo} (see Algorithm \ref{alg:theta}).

We emphasize that this rule is presented only as an illustrative consequence of the alignment score. The goal of the method is not to propose a competitive classifier, but to provide a geometrically interpretable diagnostic of how strongly a sample aligns with each dataset.

\textbf{Diagnostics.}\label{uses_diagnostics}
Samples whose angles conflict with their nominal labels (e.g., a class-$A$ sample with $\theta(z)$ close to $\pi/2$) are candidates for auditing. The GSVD frame also provides interpretable directions, which can be used to visualize shared versus dataset-specific structure.

\textbf{Decomposition Complexity.}\label{uses_complexity}
GSVD requires $O(d^3)$ time, which presents a bottleneck for large-scale databases. Since the decomposition is performed as a preprocessing step, however, it does not impact cost at inference time. Broader adoption and use cases of the GSVD in machine learning contexts may motivate the development of more efficient algorithms, leaving reduced decomposition complexity as a promising direction for future work. 

\textbf{Numerical considerations.}\label{uses_numerical_considerations}
In practical implementations the diagonal entries of $C$ and $S$ may contain very small values, making the pseudoinverses $C^\dagger$ and $S^\dagger$ sensitive to noise. In our experiments we use a truncated pseudoinverse that ignores singular values below a small tolerance. A systematic analysis of stability with respect to rank truncation, preprocessing, and noise remains an interesting direction for future work.

\section{Experiments on MNIST}
\label{sec:exp}

We describe a simple protocol on MNIST \citep{mnist} to illustrate how $\theta$ behaves as an angle score for classification. Additional results on MNIST-Fashion are reported in Appendix~\ref{app:fashion_mnist}. The purpose of this experiment is not to benchmark classification performance but to illustrate the geometric behavior of the alignment score. MNIST provides a controlled setting in which the induced subspace structure can be visualized and interpreted. The same pipeline can be applied to other representations, including learned feature embeddings, which we leave for future work.

\textbf{Setup.}
We choose two digits $a$ and $b$ (e.g., ``1'' vs.\ ``5'').
Each image is vectorized in $\mathbb{R}^{784}$ (from $28\times 28$ pixels) and mean-centered.
MNIST consists of 60{,}000 training images and 10{,}000 test images.
We construct the matrix $A$ by stacking $p=900$ training images from digit $a$ as columns,
and analogously construct $B$ by stacking $q=800$ training images from digit $b$,
with the columns selected uniformly at random from the training set. 

\begin{figure}[t]
\centering
\setlength{\tabcolsep}{2pt}
\begin{tabular}{cccc}
\safeincludegraphics[height=2.4cm]{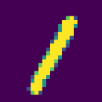} &
\safeincludegraphics[height=2.4cm]{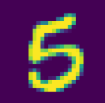} &
\safeincludegraphics[height=2.4cm]{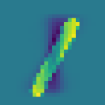} &
\safeincludegraphics[height=2.4cm]{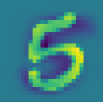} \\
(a) & (b) & (c) & (d)
\end{tabular}
\caption{Representative MNIST digit samples for the GSVD pipeline. (a,b) raw digits ``1'' and ``5''; (c,d) corresponding vectors after mean-centering.}
\label{fig:mnist_examples}
\end{figure}

\textbf{Evaluation.}
The histograms reported in Figure~\ref{fig:theta_distributions} are obtained by computing $\theta(z)$ for all test samples $z$ belonging to the two selected digits,
i.e., for all available instances of digits $a$ and $b$ in the MNIST test set, using ~\ref{eq:theta_gsvd}. We report the empirical class-conditional distributions of $\theta(z)$ as a diagnostic of relative geometric separability.

\textbf{Interpreting the $\theta(z)$ histograms. }
If the two digits induce clearly different subspaces, the histograms concentrate on different angle ranges (small angles for the first digit in the pair and large angles for the second). Overlap between the histograms indicates directions that are similarly explained by both subspaces, and thus a more ambiguous relative geometry.

\begin{figure}[t]
\centering
\begin{subfigure}[t]{0.48\textwidth}
  \centering
  \safeincludegraphics[width=\linewidth]{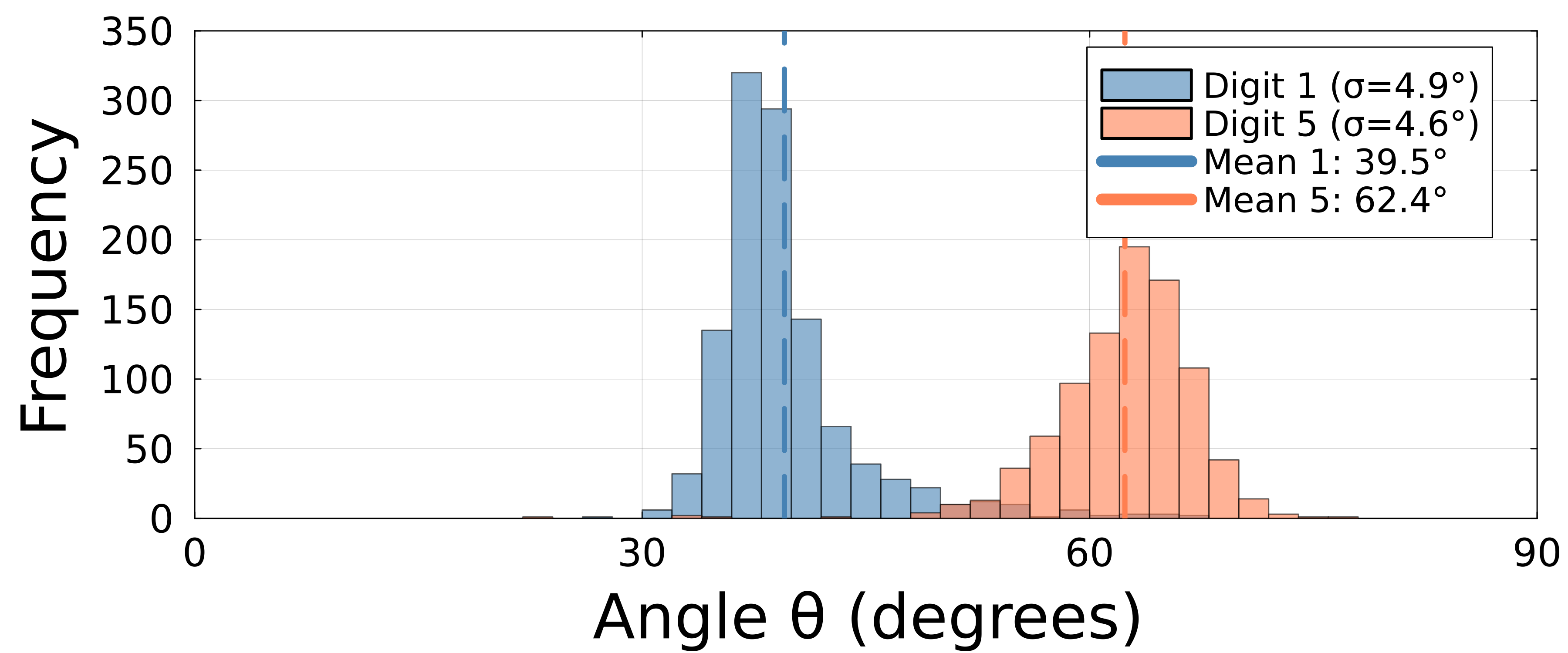}
  \caption{Digits ``1'' vs ``5''}
\end{subfigure}\hfill
\begin{subfigure}[t]{0.48\textwidth}
  \centering
  \safeincludegraphics[width=\linewidth]{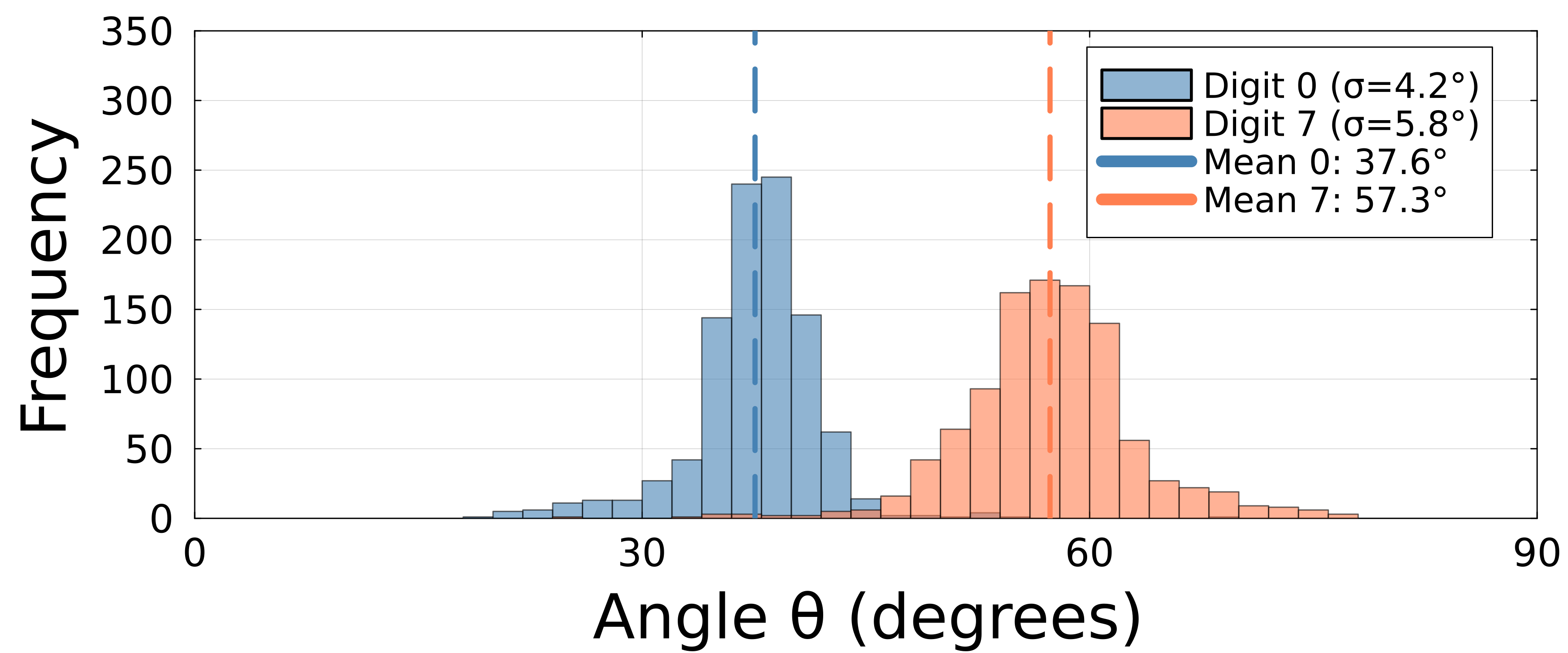}
  \caption{Digits ``0'' vs ``7''}
\end{subfigure}

\vspace{0.6em}

\begin{subfigure}[t]{0.48\textwidth}
  \centering
  \safeincludegraphics[width=\linewidth]{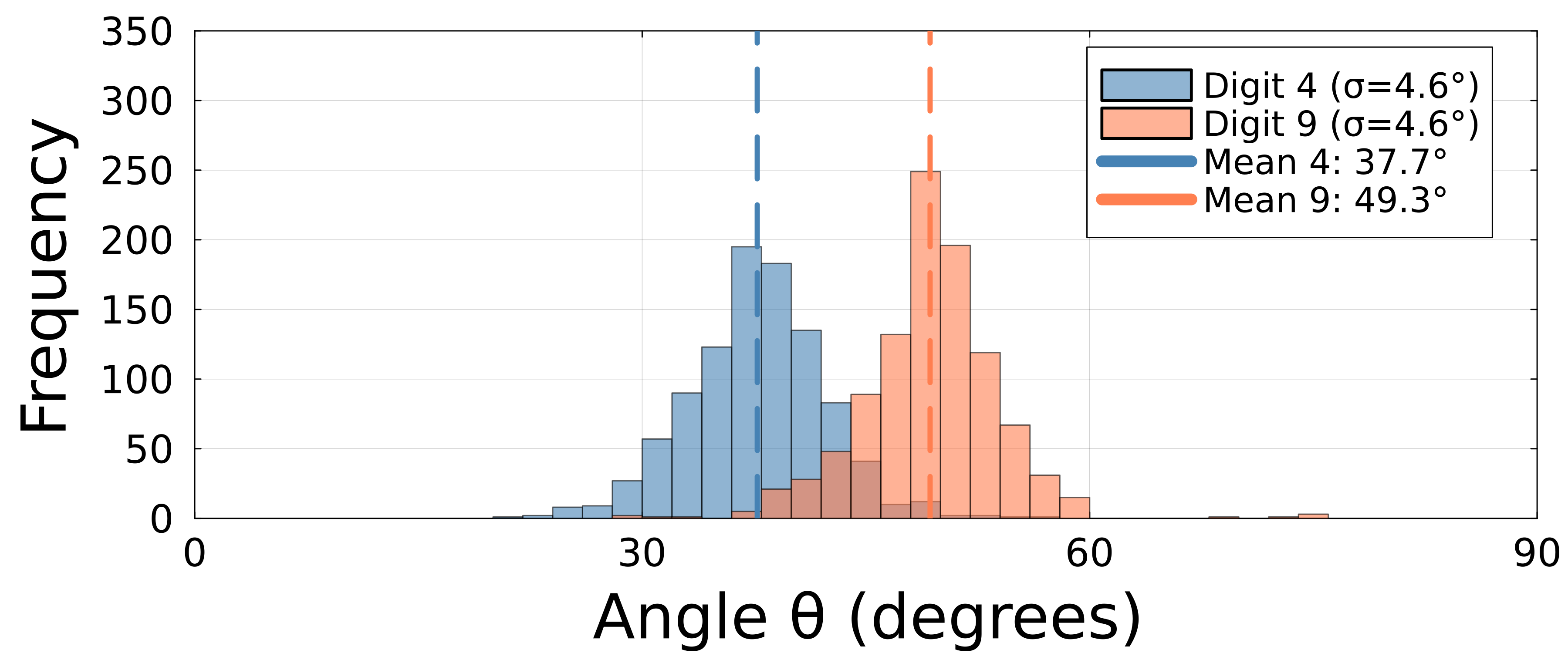}
  \caption{Digits ``4'' vs ``9''}
\end{subfigure}\hfill
\begin{subfigure}[t]{0.48\textwidth}
  \centering
  \safeincludegraphics[width=\linewidth]{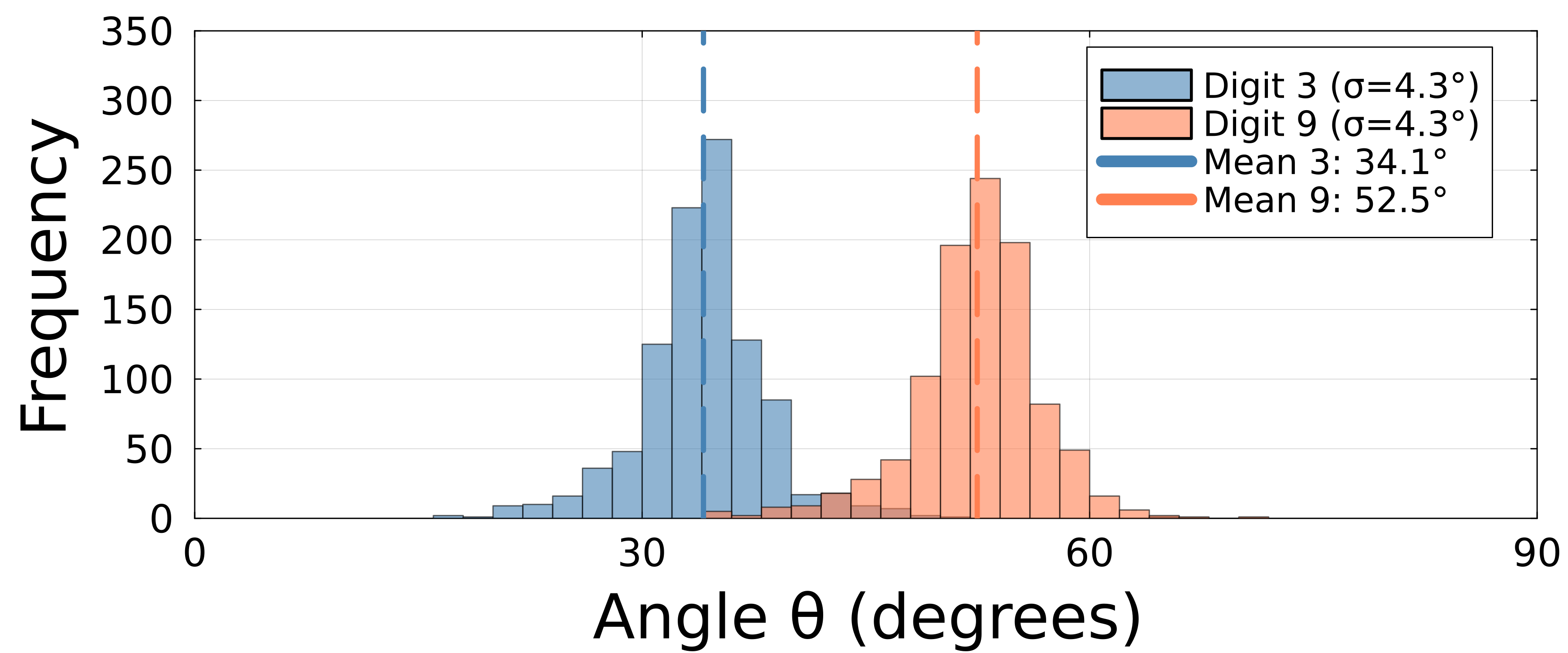}
  \caption{Digits ``3'' vs ``9''}
\end{subfigure}

\caption{Empirical distributions of the angle $\theta(z)$ on the MNIST test set for four digit pairs. Values closer to $0$ indicate stronger alignment with the first digit in the pair, while values closer to $90^\circ$ indicate stronger alignment with the second.}
\label{fig:theta_distributions}
\end{figure}

\begin{figure}[t] \centering {\setlength{\tabcolsep}{0pt} \begin{tabular}{ccc} \includegraphics[height=2.5cm]{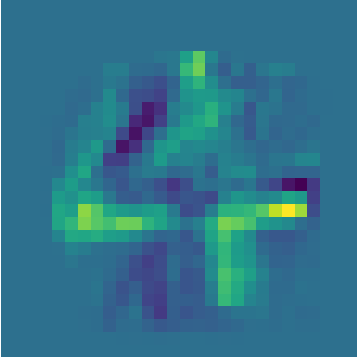} & \includegraphics[height=2.5cm]{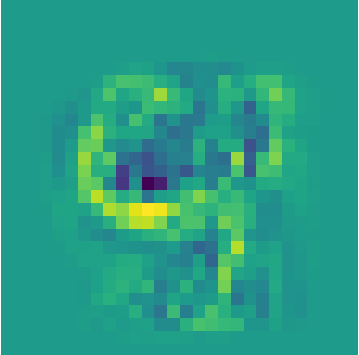} & \includegraphics[height=2.5cm]{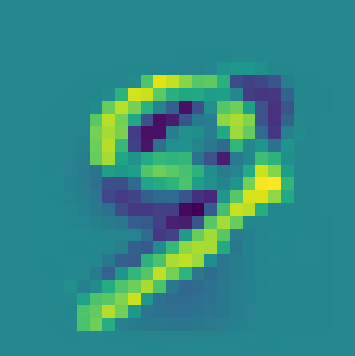} \\ (a) & (b) & (c) \end{tabular} } \caption{Representative $H$ component directions reconstructed in the image space as 28x28 images, using the viridis colormap,  obtained by the GSVD-based   optimization in Section~\ref{sec:optimization}:   (a)   representation of a solution from the minimization problem indicating a more ``4-like" direction;   (c)   representation of the solution from the maximization problem indicating a more ``9-like" direction;  (b) reconstruction of a shared direction that encapsulates the structure of both ``4" and ``9".}\label{fig:mnist_extremes} \end{figure}

For a scalar, threshold-free summary of the separation between the class-conditional
$\theta$ distributions, we report Fisher--Rao distances between the corresponding histograms
in Appendix~\ref{sec:fisher_rao} (Table~\ref{tab:fr_mnist_pairs}).

\textbf{Interpreting the $\theta(z)$ extremes.}
The extreme columns correspond to directions of the shared GSVD frame $H$ that maximize or minimize the alignment angle $\theta(z)$ as derived in Section~\ref{sec:optimization}. The columns of $H$ can represent directions ordered by increasing relative-alignment angle $\theta(z)$, ranging from directions that are maximally aligned with $A$ to those that are maximally aligned with $B$. Intermediate columns capture directions that jointly reflect structural characteristics of both subspaces.

Figure~\ref{fig:mnist_extremes} illustrates this behavior for matrices $A$ and $B$ constructed from MNIST images of digits 4 and 9, respectively. The $A$-aligned extreme exhibits a prototypical ``4-like'' structure with sharp, edge patterns, while the $B$-aligned extreme emphasizes rounded contours and a diagonal component. The central image reveals a
blended representation, combining elements shared by both digits, and thus illustrates directions
along which samples from the two classes are similarly explained.

These visual structures are inherently data-dependent: the appearance of extreme and intermediate directions is determined by the specific instances used to construct $A$ and $B$. Consequently, they should be interpreted as empirical manifestations of the underlying class geometry rather than canonical templates.

\section{Discussion: geometry, scale, and information geometry} \label{sec:disc}

This section connects the alignment angle $\theta$ to probabilistic and information-geometric interpretations. While $\theta(z)$ was originally motivated by linear-algebraic considerations, it naturally induces probabilistic structures whose geometry can be studied using tools from information geometry.

\textbf{From angles to probabilistic geometry.}
Our goal is to connect the alignment angle $\theta$, originally motivated by linear-algebraic arguments, to probabilistic and information-geometric interpretations.
We do so in two complementary steps.

First, we show that each sample-wise angle $\theta(z)$ induces a Bernoulli posterior whose
Fisher--Rao geometry is linear in $|\theta(z)-\theta(z')|$.
Second, we move from individual samples to distributions over angles and study the
Fisher--Rao distance between $\theta$-histograms, showing that this distance is driven primarily
by mass near $\theta=\pi/4$, while samples near $0$ or $\pi/2$ correspond to confident regions
with negligible contribution.   

The results in this section are presented using Fisher--Rao geometry.
Alternatively, they can also be derived via the Hellinger geometry,
as discussed in Appendix~\ref{app:hellinger_fr}.

\subsection{Per-sample Fisher--Rao geometry induced by $\theta$}
$\theta(z)=\arctan(a(z)/b(z))$.
We interpret inverse-squared costs as evidences and adopt the parametric posterior model
\begin{equation}
P(A\mid \theta)=\frac{1/a(z)^2}{1/a(z)^2+1/b(z)^2}
=\frac{b(z)^2}{a(z)^2+b(z)^2}
=\cos^2\theta,
\qquad
P(B\mid \theta)=\sin^2\theta,
\label{eq:posterior_cos2}
\end{equation}
so that $P(A\mid\theta)/P(B\mid\theta)=\cot^2\theta$.
This posterior coincides with a Bayes posterior based on likelihoods
$p(\theta\mid A)$ and $p(\theta\mid B)$ whenever
\begin{equation}
\frac{p(\theta\mid A)}{p(\theta\mid B)}=\frac{\pi_B}{\pi_A}\cot^2\theta,
\label{eq:lr_cot2}
\end{equation}
where $\pi_A=P(A)$ and $\pi_B=P(B)$.

Under the square-root (Bhattacharyya) embedding
$\psi(\theta)=(\cos\theta,\sin\theta)$ of the Bernoulli simplex,
the Fisher--Rao distance between the posteriors induced by two samples $z$ and $z'$ is
\begin{equation}
d_{\mathrm{FR}}(p(z),p(z'))
=2\acos\!\big(\langle \psi(\theta(z)),\psi(\theta(z'))\rangle\big)
=2\acos\!\big(\cos(\theta(z)-\theta(z'))\big)
=2|\theta(z)-\theta(z')|,
\label{eq:fr_sample}
\end{equation}
since $\theta\in[0,\pi/2]$.
Thus, at the sample level, differences in $\theta$ correspond exactly to Fisher--Rao distances
between induced Bernoulli posteriors.

\subsection{Fisher--Rao distance between $\theta$-histograms and posterior ambiguity} \label{sec:fisherraodist}
We now move from individual samples to distributions over angles.
Fix a binning of $[0,\pi/2]$ into $m$ bins and let
$P=(P_i)$ and $Q=(Q_i)$ be normalized histograms estimating the class-conditional
likelihoods $P(\theta\in \text{bin }i\mid A)$ and $P(\theta\in \text{bin }i\mid B)$.
Let $\pi_A$ and $\pi_B$ denote the class priors and define the mixture weights
$m_i=\pi_A P_i+\pi_B Q_i$.
For bin index $I=i$, define the binary label random variable
$Y=\mathbf{1}\{\text{class}=A\}$, which equals $1$ if the sample belongs to class $A$
and $0$ otherwise.
The posterior distribution of $Y$ given $I=i$ is
\begin{equation}
(Y\mid I=i)\sim \mathrm{Bernoulli}(r_i),
\qquad
r_i=P(A\mid I=i)=\frac{\pi_A P_i}{\pi_A P_i+\pi_B Q_i}. \label{eq:posty}
\end{equation}
The Fisher--Rao distance  between the histograms, denoted by $d_{\mathrm{FR}}(P,Q)$, and defined as $d_{\mathrm{FR}}(P,Q)
=2\acos\!\left(\sum_{i=1}^m \sqrt{P_iQ_i}\right)$,   satisfies
\begin{equation}
d_{\mathrm{FR}}(P,Q)
=2\acos\!\left(\sum_{i=1}^m m_i \sqrt{\frac{r_i(1-r_i)}{\pi_A\pi_B}}\right)
=2\acos\!\left(
\frac{\mathbb{E}_{I\sim m}\!\left[\mathrm{Std}(Y{\mid} I)\right]}
{\sqrt{\pi_A\pi_B}}
\right).  
\end{equation}
Hence, $d_{\mathrm{FR}}(P,Q)$ is a monotone \emph{decreasing} function of the
\emph{expected posterior standard deviation}
$\mathbb{E}[\mathrm{Std}(Y\mid I)]$:
bins with $r_i\approx\tfrac12$ contribute most to the standard deviation, while bins with
$r_i\approx0$ or $1$ contribute little.
Under the parametric posterior~\eqref{eq:posterior_cos2} and equal priors,
samples with $\theta\approx\pi/4$ concentrate probability mass in posterior bins with
$r_i\approx\tfrac12$,
whereas samples with $\theta\approx 0$ or $\pi/2$ concentrate probability mass in bins with
$r_i$ close to $1$ or $0$, respectively.
This establishes a direct link between the linear-algebraic alignment angle $\theta$,
the distance between class-conditional likelihoods, and posterior uncertainty. 

\noindent\textbf{Numerical illustration. }
Figure~\ref{fig:theta_distributions} provides an empirical illustration of the theoretical picture
developed above. For digit pairs with clearer geometric separation (e.g., ``1'' vs.\ ``5'',   ``3'' vs.\ ``9'' and
``0'' vs.\ ``7''), the class-conditional $\theta$-histograms concentrate toward opposite extremes,
with relatively little mass near $\theta=\pi/4$, indicating confident posterior regions.
In contrast, visually similar pairs (e.g., ``4'' vs.\ ``9'') exhibit increased mass around $\theta\approx\pi/4$, reflecting higher posterior
ambiguity. This behavior is consistent with the Fisher--Rao analysis: overlap near $\pi/4$
corresponds to bins with $r_i\approx\tfrac12$ that dominate the Bhattacharyya coefficient and
reduce histogram-level Fisher--Rao distance.

Together, the results  presented in this section complete the connection between   
$\theta$ as introduced in Definition~\ref{eq:keydefinition} and probabilistic notions of posterior uncertainty through Fisher--Rao geometry.

\section{Conclusion} \label{sec:concl}

We introduced a geometry-grounded approach to dataset comparison built around a single primitive,
the co-span relation $Ax=By=z$, and operationalized it through a GSVD joint frame. The resulting
alignment angle $\theta(z)$ provides a minimal, interpretable summary of \emph{relative} explanatory
power: small angles indicate that $z$ is represented more economically by $A$, large angles favor $B$,
and values near $\pi/4$ correspond to shared structure. Beyond yielding a simple illustrative decision rule, the
same GSVD factors expose representative directions (including extremes) that act as visual
diagnostics of what is shared and what is dataset-specific. 
Empirically, MNIST digit pairs exhibit characteristic class-conditional $\theta$ distributions whose
overlap (or lack thereof) matches intuitive notions of separability. The Fisher--Rao distance between
$\theta$-histograms complements these plots with a scalar, information-geometric separation metric,
linking “dataset similarity” to posterior ambiguity induced by the angle.

This work is limited to two domains and controlled pixel-level datasets. Several extensions are left for future work. First, the framework can be generalized from dataset pairs to multiple domains, for instance by aggregating pairwise angles into a simplex-valued score or using multiway GSVD-style constructions. Second, the robustness of the alignment angle has not yet been characterized; future work should study the sensitivity of $\theta$ to preprocessing, rank selection, and the regularization or truncation of $C^\dagger$ and $S^\dagger$, as well as its behavior under noise and partial mismatch when $z \notin \mathrm{col}(A)\cap\mathrm{col}(B)$. Finally, while our experiments focus on raw pixel representations for interpretability, the framework applies more broadly to arbitrary feature spaces. In particular, matrices $A$ and $B$ can be constructed from feature embeddings produced by pretrained models such as convolutional networks or transformers. In such representations the linear subspace assumption may be more appropriate, since modern embeddings are designed to linearize semantic structure. Exploring the behavior of the alignment angle in such representation spaces is therefore a natural direction for future work.
\bibliographystyle{iclr2026_conference}
\bibliography{references}

@book{golub2013matrix,
  title={Matrix Computations},
  author={Golub, Gene and Van Loan, Charles},
  edition={4th},
  year={2013},
  publisher={Johns Hopkins University Press}
}

@article{vanloan1976gsvd,
  title={Generalizing the singular value decomposition},
  author={Van Loan, Charles},
  journal={SIAM Journal on Numerical Analysis},
  volume={13},
  number={1},
  pages={76--83},
  year={1976}
}

@article{edelman2020gsvd,
  title={{The GSVD: Where are the ellipses?, matrix trigonometry, and more}},
  author={Edelman, Alan and Wang, Yuyang},
  journal={SIAM Journal on Matrix Analysis and Applications},
  volume={41},
  number={4},
  pages={1826--1856},
  year={2020},
  publisher={SIAM}
}

@article{paige1981gsvd,
  title={Towards a generalized singular value decomposition},
  author={Paige, Christopher  and Saunders, Michael},
  journal={SIAM Journal on Numerical Analysis},
  volume={18},
  number={3},
  pages={398--405},
  year={1981}
}

@article{stewart1993svd,
  title={On the early history of the singular value decomposition},
  author={Stewart, G. W.},
  journal={SIAM Review},
  volume={35},
  number={4},
  pages={551--566},
  year={1993}
}

@article{gower1975procrustes,
  title={{Generalized Procrustes analysis}},
  author={Gower, J. C.},
  journal={Psychometrika},
  volume={40},
  number={1},
  pages={33--51},
  year={1975}
}

@article{hotelling1936cca,
  title={Relations between two sets of variates},
  author={Hotelling, Harold},
  journal={Biometrika},
  volume={28},
  number={3/4},
  pages={321--377},
  year={1936}
}

@inproceedings{mnist,
  title={Gradient-based learning applied to document recognition},
  author={LeCun, Yann and Bottou, L{\'e}on and Bengio, Yoshua and Haffner, Patrick},
  booktitle={Proceedings of the IEEE},
  volume={86},
  number={11},
  pages={2278--2324},
  year={1998}
}

@article{bjorck1973angles,
  title={Numerical methods for computing the angles between linear subspaces},
  author={Bj{\"o}rck, {\AA}ke and Golub, Gene H.},
  journal={Mathematics of Computation},
  volume={27},
  number={123},
  pages={579--594},
  year={1973}
}

@article{knyazev2002angles,
  title={Principal angles between subspaces in an $A$-based scalar product: algorithms and perturbation estimates},
  author={Knyazev, Andrew V. and Argentati, Merico E.},
  journal={SIAM Journal on Scientific Computing},
  volume={23},
  number={6},
  pages={2008--2040},
  year={2002}
}

@article{chu1997variational,
  title={On a Variational Formulation of the Generalized Singular Value Decomposition},
  author={Chu, Moody T. and Funderlic, Robert E. and Golub, Gene H.},
  journal={SIAM Journal on Matrix Analysis and Applications},
  volume={18},
  number={4},
  pages={1082--1092},
  year={1997},
  publisher={SIAM}
}

@article{raghu2017svcca,
  title={SVCCA: Singular Vector Canonical Correlation Analysis for Deep Learning Dynamics and Interpretability},
  author={Raghu, Maithra and Gilmer, Justin and Yosinski, Jason and Sohl-Dickstein, Jascha},
  journal={arXiv preprint arXiv:1706.05806},
  year={2017}
}

@inproceedings{morcos2018pwcca,
  title={Insights on representational similarity in neural networks with canonical correlation},
  author={Morcos, Ari S. and Raghu, Maithra and Bengio, Samy},
  booktitle={Advances in Neural Information Processing Systems},
  volume={31},
  year={2018}
}

@inproceedings{kornblith2019cka,
  title={Similarity of Neural Network Representations Revisited},
  author={Kornblith, Simon and Norouzi, Mohammad and Lee, Honglak and Hinton, Geoffrey},
  booktitle={International Conference on Machine Learning},
  year={2019}
}

@article{cortes2012cka,
  title={Algorithms for learning kernels based on centered alignment},
  author={Cortes, Corinna and Mohri, Mehryar and Rostamizadeh, Afshin},
  journal={Journal of Machine Learning Research},
  volume={13},
  number={Mar},
  pages={795--828},
  year={2012}
}

@article{gretton2012mmd,
  title={A Kernel Two-Sample Test},
  author={Gretton, Arthur and Borgwardt, Karsten M. and Rasch, Malte J. and Sch{\"o}lkopf, Bernhard and Smola, Alexander},
  journal={Journal of Machine Learning Research},
  volume={13},
  number={Mar},
  pages={723--773},
  year={2012}
}

@article{peyre2019ot,
  title={Computational Optimal Transport},
  author={Peyr{\'e}, Gabriel and Cuturi, Marco},
  journal={Foundations and Trends in Machine Learning},
  volume={11},
  number={5--6},
  pages={355--607},
  year={2019}
}

@inproceedings{heusel2017fid,
  title={{GANs Trained by a Two Time-Scale Update Rule Converge to a Local Nash Equilibrium}},
  author={Heusel, Martin and Ramsauer, Hubert and Unterthiner, Thomas and Nessler, Bernhard and Hochreiter, Sepp},
  booktitle={Advances in Neural Information Processing Systems},
  year={2017}
}

@article{pan2010transfer,
  title={A Survey on Transfer Learning},
  author={Pan, Sinno Jialin and Yang, Qiang},
  journal={IEEE Transactions on Knowledge and Data Engineering},
  volume={22},
  number={10},
  pages={1345--1359},
  year={2010}
}

@article{zheng2020improved,
  title={{Improved TrAdaBoost and its application to transaction fraud detection}},
  author={Zheng, Lutao and Liu, Guanjun and Yan, Chungang and Jiang, Changjun and Zhou, Mengchu and Li, Maozhen},
  journal={IEEE Transactions on Computational Social Systems},
  volume={7},
  number={5},
  pages={1304--1316},
  year={2020},
  publisher={IEEE}
}

@article{sugiyama2007covariate,
  title={Covariate shift adaptation by importance weighted cross validation.},
  author={Sugiyama, Masashi and Krauledat, Matthias and M{\"u}ller, Klaus-Robert},
  journal={Journal of Machine Learning Research},
  volume={8},
  number={5},
  year={2007}
}

@inproceedings{huang2007kmm,
  title={Correcting Sample Selection Bias by Unlabeled Data},
  author={Huang, Jiayuan and Gretton, Arthur and Borgwardt, Karsten M. and Sch{\"o}lkopf, Bernhard and Smola, Alexander J.},
  booktitle={Advances in Neural Information Processing Systems},
  year={2007}
}

\appendix
\renewcommand{\thetheorem}{\arabic{theorem}}
 \setcounter{theorem}{0}

\section{Notation}

\label{app:notation}
Table~\ref{tab:notation} summarizes the notation used throughout this work.
\begin{table}[t]
\centering
\caption{Notation used throughout the paper.}
\label{tab:notation}
\setlength{\tabcolsep}{6pt}
\begin{tabular}{@{}ll@{}}
\toprule
\textbf{Symbol} & \textbf{Meaning} \\
\midrule
$d$ & Ambient dimension (e.g., $784$ for $28\times 28$ MNIST images) \\
$p,q$ & Number of columns (samples or basis vectors) used to form $A$ and $B$ \\
$A\in\mathbb{R}^{d\times p}$, $B\in\mathbb{R}^{d\times q}$ & Dataset matrices (columns are observations in $\mathbb{R}^d$) \\
$a_i$, $b_j$ & Columns of $A$ and $B$ (individual observations) \\
$\col(A)$, $\col(B)$ & Column spaces (subspaces spanned by columns of $A$ and $B$) \\
$x\in\mathbb{R}^p$, $y\in\mathbb{R}^q$ & Coefficients such that $Ax$ and $By$ represent an ambient vector \\
$z\in\mathbb{R}^d$ & Sample / ambient vector under analysis \\
$Ax=By=z$ & Co-span relation (same $z$ representable by both $A$ and $B$) \\
$\mathcal{R}_{\text{co-span}}$ & Relation set $\{(x,y): Ax=By\}$ \\
$\mathcal{R}_{\text{co-span}}(z)$ & Fiber $\{(x,y): Ax=By=z\}$ \\
$\Ker(A)$, $\Ker(B)$ & Nullspaces of $A$ and $B$ \\
$x\perp\Ker(A)$, $y\perp\Ker(B)$ & Minimum-$\ell_2$ coefficient choice among solutions of $Ax=By=z$ \\
$A=HCU$, $B=HSV$ & GSVD form used in the paper \\
$H\in\mathbb{R}^{d\times d}$ & Shared (invertible/left-invertible) ambient frame matrix \\
$U\in\mathbb{R}^{p\times p}$, $V\in\mathbb{R}^{q\times q}$ & Orthogonal factors in the GSVD \\
$C$, $S$ & (Block-)diagonal GSVD factors with $C^\top C + S^\top S = I$ \\
$r,k,t$ & Block sizes in \eqref{eq:cs_blocks} with $r+k+t=d$ \\
$\widetilde C,\widetilde S\in\mathbb{R}^{k\times k}$ & Positive diagonal “shared” blocks in \eqref{eq:cs_blocks} \\
$h_i$ & $i$-th column of $H$ \\
$H^\dagger$ & Moore--Penrose pseudoinverse of $H$ \\
$C^\dagger$, $S^\dagger$ & Moore--Penrose pseudoinverses (often truncated/regularized in practice) \\
$c(z)=H^\dagger z$ & Coordinates of $z$ in the GSVD ambient frame (least-squares if needed) \\
$a(z)=\|C^\dagger c(z)\|_2$,  & GSVD-weighted representation costs for $A$  \\
$b(z)=\|S^\dagger c(z)\|_2$ & GSVD-weighted representation costs for $B$  \\
$\theta(z)\in[0,\pi/2]$ & Alignment angle $\arctan(\|x\|_2/\|y\|_2)=\arctan(a(z)/b(z))$ \\
$\tau$ & Classification threshold on $\theta(z)$ (default $\pi/4$) \\
$e_k$ & $k$-th canonical basis vector \\
$z_{\max},z_{\min}$ & Extreme directions maximizing/minimizing $\theta(z)$ over a constraint set $\mathcal{Z}$ \\
$P,Q$ & Normalized histograms approximating $p(\theta\mid A)$ and $p(\theta\mid B)$ \\
$d_{\mathrm{FR}}(P,Q)$ & Fisher--Rao distance between histograms (Eq.~\eqref{eq:app_fr_bc}) \\
\bottomrule
\end{tabular}
\end{table}


\section{Angle Score and Minimal Classifier Algorithm}
\label{app:algo}
Algorithm~\ref{alg:theta} computes   the GSVD angle score~\eqref{eq:theta_def} and uses it in for classification as described in Section~\ref{sec:uses}.

Matrices $A$ and $B$ are constructed in line~\ref{line:buildAB}, and their GSVD is computed in
line~\ref{line:gsvd}.
For each test sample, projection onto the shared basis is performed in
line~\ref{line:project}, class-specific values are computed in
line~\ref{line:energies}, and the angle score is formed in
line~\ref{line:theta} before thresholding in line~\ref{line:decision}.

\begin{algorithm}[h]
\caption{GSVD angle score and a minimal classifier}
\label{alg:theta}
\begin{algorithmic}[1]
\Require Two labeled datasets $\mathcal{D}_A,\mathcal{D}_B\subset\R^d$, ranks $p,q$, threshold $\tau$
\State Build $A\in\R^{d\times p}$ and $B\in\R^{d\times q}$ from training samples or low-rank bases
\label{line:buildAB}
\State Compute GSVD factors $(H,C,S,U,V)$ such that $A=HCU$ and $B=HSV$
\label{line:gsvd}
\For{each test sample $z\in\R^d$}
    \State $c \leftarrow H^\dagger z$ \Comment{least-squares projection}
    \label{line:project}
    \State $a\leftarrow \norm{C^\dagger c}_2,\;\; b\leftarrow \norm{S^\dagger c}_2$
    \label{line:energies}
    \State $\theta(z)\leftarrow \arctan(a/b)$
    \label{line:theta}
    \State Predict $\hat{\ell}(z)\leftarrow A$ if $\theta(z)\le \tau$, else $B$
    \label{line:decision}
\EndFor
\end{algorithmic}
\end{algorithm}

\begin{remark}\textbf{Note on Dimensionality Reduction.}
Exploiting the structure of $H$ and the characterization of its extreme directions discussed in
Section~\ref{sec:optimization}, a practical micro-optimization for improving classification
efficiency without materially degrading performance is to discard directions of $H$ that receive
comparable weights from $C^\dagger$ and $S^\dagger$.
Intuitively, such directions encode features that are largely shared between the two datasets and
therefore contribute little discriminative information when assessing whether a given instance is
more strongly aligned with $A$ or with $B$.
\end{remark}

\section{Span view: instance alignment and a GSVD-based relative Procrustes}
\label{app:span_view}
This appendix records a complementary interpretation of the same linear relation studied in the main text. While our pipeline is driven by the co-span constraint $Ax=By=z$ (dimension alignment), the same compatibility can be read in a \emph{span} form that emphasizes \emph{instance alignment}.

\subsection{Span parameterization of a linear relation}
A linear relation between two subspaces can be described implicitly (co-span) by
\[
\mathcal{R}_{\text{co-span}}=\{(x,y): Ax = By\},
\]
or explicitly (span) by introducing a shared latent parameter $w$:
\begin{equation}
\mathcal{R}_{\text{span}}
=
\{(x,y): \exists\, w \text{ such that } x = Fw,\; y = Gw\}.
\label{eq:app_span_relation}
\end{equation}
In this view, paired coordinates $(x,y)$ arise from the \emph{same} latent code $w$.
Geometrically, span emphasizes \emph{instance-level pairing}: one latent point generates a compatible point in each domain.

\subsection{Relative Procrustes as instance alignment}
Suppose we have two sets of instance vectors in a shared ambient space (e.g., embeddings)
\[
X = [x_1,\dots,x_n]\in\mathbb{R}^{d\times n},\qquad
Y = [y_1,\dots,y_n]\in\mathbb{R}^{d\times n}.
\]
A classical alignment primitive is (orthogonal) Procrustes:
\begin{equation}
R^\star \in \argmin_{R^\top R = I}\, \|RX - Y\|_F^2.
\label{eq:app_procrustes}
\end{equation}
In the dataset-comparison setting we often do not want to align in the raw ambient basis, but rather in a \emph{relative geometry} that discounts directions that are not shared and emphasizes directions that are shared.

\subsection{GSVD provides the natural coordinate system}
Let the two datasets define matrices $A$ and $B$ and consider their GSVD
\[
A = H C U,\qquad B = H S V,\qquad C^\top C + S^\top S = I.
\]
The shared frame $H$ defines coordinates for any $z\in\mathbb{R}^d$ via $c=H^\dagger z$.
Thus instance sets admit GSVD-frame representations
\[
\widetilde X = H^\dagger X,\qquad \widetilde Y = H^\dagger Y.
\]
Instance alignment can then be performed in this frame, rather than in the raw ambient basis.

\subsection{Angle-based truncation for relative alignment}
The diagonal structure of $(C,S)$ suggests a truncation rule: directions where both factors are non-negligible correspond to shared structure.
Let $\mathcal{I}_{\text{shared}}$ denote indices deemed shared (e.g., by thresholding $\min\{c_{ii},s_{ii}\}$) and define
\[
P_{\text{shared}} = \mathrm{diag}(p_1,\dots,p_d),\quad
p_i = \mathbf{1}\{i\in \mathcal{I}_{\text{shared}}\}.
\]
Then truncated coordinates are $\widetilde X_{\text{sh}} = P_{\text{shared}}\widetilde X$ and $\widetilde Y_{\text{sh}} = P_{\text{shared}}\widetilde Y$, and a GSVD-aware Procrustes alignment is
\begin{equation}
R^\star_{\text{sh}}
\in
\argmin_{R^\top R = I}\, \|R\widetilde X_{\text{sh}} - \widetilde Y_{\text{sh}}\|_F^2.
\label{eq:app_relative_procrustes}
\end{equation}
This objective focuses alignment on the shared geometry, ignoring weakly shared directions.

\section{Analytical Properties and Proofs for the GSVD-Based Framework}
\label{sec:appanalytproof}
We begin with some initial definitions. Let $\tilde{x}=(\tilde x_1, 
\tilde x_2, 
\tilde x_3)$ and $\tilde{y}=(\tilde y_1, 
\tilde y_2, 
\tilde y_3)$ where the dimensions correspond to the blocks of $C$ and $S$, 
\[
C \tilde x
=
\begin{bmatrix}
I_{r} & 0 & 0 \\
0 & \widetilde C & 0 \\
0 & 0 & 0_{t}
\end{bmatrix}
\begin{bmatrix}
\tilde x_1\\
\tilde x_2\\
\tilde x_3
\end{bmatrix}
=
\begin{bmatrix}
\tilde x_1\\
\widetilde C\,\tilde x_2\\
0
\end{bmatrix},
\]

\[
S \tilde y
=
\begin{bmatrix}
0_{r} & 0 & 0 \\
0 & \widetilde S & 0 \\
0 & 0 & I_{t}
\end{bmatrix}
\begin{bmatrix}
\tilde y_1\\
\tilde y_2\\
\tilde y_3
\end{bmatrix}
=
\begin{bmatrix}
0\\
\widetilde S\,\tilde y_2\\
\tilde y_3
\end{bmatrix},
\]
where $\tilde{x}_1 \in \mathbb{R}^{r}$, $\tilde{x}_2 \in \mathbb{R}^{k}$ and $\tilde{x}_3 \in \mathbb{R}^{t}$. Similarly,  $\tilde{y}_1 \in \mathbb{R}^{r}$, $\tilde{y}_2 \in \mathbb{R}^{k}$ and $\tilde{y}_3 \in \mathbb{R}^{t}$. 

If \(C\tilde x = S\tilde y\), then componentwise
\[
\tilde x_1 = 0, 
\qquad
\widetilde C\,\tilde x_2 = \widetilde S\,\tilde y_2,
\qquad
\tilde y_3 = 0.
\]

In what follows, we let
  \begin{equation}
    \tilde x := Ux, \quad  \tilde y := Vy.
\end{equation}
Then, our goal in this appendix is twofold. First, we  show that
\begin{equation}
   Ax=By  \Longleftrightarrow C \tilde{x} = S\tilde{y}.
\end{equation}
with
\begin{equation}
\tilde x
=
\begin{bmatrix}
0\\
\widetilde S w\\
\tilde x_3
\end{bmatrix},
\qquad
\tilde y
=
\begin{bmatrix}
\tilde y_1\\
\widetilde C w\\
0
\end{bmatrix},
\end{equation}
where $\tilde{x}_3$ and $\tilde{y}_1$ are free and $  w\in\mathbb{R}^{k}$.

Second, we show that if $x \perp \Ker(A)$ and $y \perp \Ker(B)$ then each  solution $(x,y)$ of $ Ax=By$ uniquely characterizes a pair $(\tilde{x},\tilde{y})$,
\begin{equation}
\tilde x
=
\begin{bmatrix}
0\\
\widetilde S w\\
0
\end{bmatrix},
\qquad
\tilde y
=
\begin{bmatrix}
0\\
\widetilde C w\\
0
\end{bmatrix}.
\end{equation}


\begin{lemma}[GSVD induces an explicit parameterization of the co-span constraint]
\label{lem:implicit-explicit}
Under Definition~\ref{eq:intro_gsvd},
\[
Ax=By \Longleftrightarrow  C \tilde{x} = S\tilde{y}
\]
where
\[ 
\begin{bmatrix}
\tilde x_1\\
\widetilde C\,\tilde x_2\\
0
\end{bmatrix} = \begin{bmatrix}
0\\
\widetilde S\,\tilde y_2\\
\tilde{y}_3
\end{bmatrix}
\]
and $\tilde x_3$ and $\tilde y_1$ are  free. Moreover, there exists a vector $w$ such that
\[
\tilde x =
\begin{bmatrix}
0\\
\widetilde S\, w\\
\tilde x_3
\end{bmatrix},
\qquad
\tilde y =
\begin{bmatrix}
\tilde y_1\\
\widetilde C\, w\\
0
\end{bmatrix},
\]
with $\tilde x_3$ and $\tilde y_1$ free.\end{lemma}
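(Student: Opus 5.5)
The plan has three steps: use the invertibility of $H$ to move the relation into the GSVD coordinates, read off the block equations, and then parameterize the shared block by a single vector $w$.

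\emph{Step 1: reduce to the diagonal system.} We substitute the factorization \eqref{eq:intro_gsvd} into the co-span constraint, which gives $Ax=By \iff HCUx = HSVy$. In the (full) form of \eqref{eq:intro_gsvd}, $H$ is invertible, and in reduced variants it is left-invertible. In either case $H$ has trivial kernel, so we may cancel it: $HCUx=HSVy \iff CUx = SVy$. With $\tilde x = Ux$ and $\tilde y = Vy$ this is exactly $C\tilde x = S\tilde y$. Since $U$ and $V$ are orthogonal, the map $(x,y)\mapsto(\tilde x,\tilde y)$ is a bijection. The equivalence therefore holds at the level of solution sets, not merely as an implication.

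\emph{Step 2: read off the blocks.} Using the block structure \eqref{eq:cs_blocks}, we expand $C\tilde x$ and $S\tilde y$ as in the displays preceding the lemma. Equating the three block rows gives $\tilde x_1 = 0$, $\widetilde C\tilde x_2 = \widetilde S\tilde y_2$, and $0 = \tilde y_3$. The components $\tilde x_3$ and $\tilde y_1$ are multiplied by the zero blocks $0_t$ and $0_r$. They therefore never enter the equations and are free. This proves the first displayed form of the lemma.

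\emph{Step 3: parameterize the shared block.} The only nontrivial part is writing the solutions of $\widetilde C\tilde x_2 = \widetilde S\tilde y_2$ as $\tilde x_2 = \widetilde S w$ and $\tilde y_2 = \widetilde C w$. Because $\widetilde C$ and $\widetilde S$ are diagonal with strictly positive entries, both are invertible, and diagonal matrices commute.

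\begin{itemize}
\item Existence: given a solution, we define $w := \widetilde S^{-1}\tilde x_2$, so that $\tilde x_2 = \widetilde S w$. Then $\widetilde S\tilde y_2 = \widetilde C\widetilde S w = \widetilde S\widetilde C w$. Cancelling the invertible $\widetilde S$ gives $\tilde y_2 = \widetilde C w$.
\item Converse: any $w$ yields a solution, since $\widetilde C\widetilde S w = \widetilde S\widetilde C w$.
\item Uniqueness: $w$ is unique given $(\tilde x,\tilde y)$, because $\widetilde S$ is injective.
\end{itemize}

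Combining this with Step 2 gives the stated forms of $\tilde x$ and $\tilde y$, with $\tilde x_3$ and $\tilde y_1$ free.

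The main obstacle is not computational but one of hypotheses. The cancellation of $H$ in Step 1 relies on its (left-)invertibility. Step 3 relies on $\widetilde C$ and $\widetilde S$ being diagonal (so they commute) with strictly positive entries (so they are invertible). In a block-diagonal variant where $\widetilde C$ and $\widetilde S$ are not diagonal, we would instead need commutativity of the blocks. For that case we would fall back on the relation $\widetilde C^{\top}\widetilde C + \widetilde S^{\top}\widetilde S = I$, which follows from $C^{\top}C + S^{\top}S = I$ restricted to the middle block. We will state explicitly that we work in the diagonal setting of \eqref{eq:cs_blocks}.
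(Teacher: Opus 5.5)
Your proof is correct and follows essentially the same route as the paper: cancel $H$ using its left inverse, substitute $\tilde x = Ux$, $\tilde y = Vy$, read off the three block rows of $C\tilde x = S\tilde y$, and parameterize the middle block. The paper chooses scalars $w_i$ entrywise with $\tilde x_{2i} = \widetilde S_i w_i$, which is exactly your $w = \widetilde S^{-1}\tilde x_2$; your converse and uniqueness of $w$ are welcome extras, while the closing remark on non-diagonal blocks is unneeded here (and $\widetilde C^\top\widetilde C + \widetilde S^\top\widetilde S = I$ alone would not give the commutativity you would need).
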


\begin{proof}
The proof proceeds as the following sequence of equivalence steps:
\begin{calculation}[\Longleftrightarrow]
Ax = By
\step{GSVD definition}
HCUx = HSVy.
\step{By GSVD definition  $\exists \bar H$ such that $\bar HH = I$ }
CUx = SVy
\step{Change of Variables: $\tilde x = Ux$ and $\tilde y=Vy$}
C\tilde x = S \tilde y
\step{We can expand the $C,S$ matrices by the definition in \eqref{def:gsvd_block} to create:}
\begin{bmatrix}
I_{r} & 0 & 0 \\
0 & \widetilde C & 0 \\
0 & 0 & 0_{t}
\end{bmatrix}
\begin{bmatrix}
\tilde x_1\\
\tilde x_2\\
\tilde x_3
\end{bmatrix} = \begin{bmatrix}
0_{r} & 0 & 0 \\
0 & \widetilde S & 0 \\
0 & 0 & I_{t}
\end{bmatrix}
\begin{bmatrix}
\tilde y_1\\
\tilde y_2\\
\tilde y_3
\end{bmatrix}
\step{Algebra}
\begin{bmatrix}
\tilde x_1\\
\widetilde C\,\tilde x_2\\
0
\end{bmatrix} = \begin{bmatrix}
0\\
\widetilde S\,\tilde y_2\\
\tilde{y}_3
\end{bmatrix}
\text{with $\tilde x_3$ and $\tilde y_1$ free.}
\end{calculation}

We now focus on the middle block. For each index \(i\), the relation
\[
\widetilde C_i\,\tilde x_{2i}=\widetilde S_i\,\tilde y_{2i}
\]
holds. Since \(\widetilde C_i>0\) and \(\widetilde S_i>0\), this equality can be reparameterized by introducing a scalar \(w_i\) such that
\[
\tilde x_{2i}=\widetilde S_i\,w_i,
\qquad
\tilde y_{2i}=\widetilde C_i\,w_i.
\]
Because \(\widetilde C\) and \(\widetilde S\) are diagonal, this construction applies entrywise. Collecting the coordinates yields the vector form
\[
\tilde x_2=\widetilde S\,w,
\qquad
\tilde y_2=\widetilde C\,w,
\qquad
w\in\mathbb{R}^k.
\]

To find the values of $\tilde x_1, \tilde y_3$ we can look at the equality restriction above.

Finally, the explicit parametrization of $\tilde x$ and $\tilde y$ is: 
\[
\tilde x =
\begin{bmatrix}
0\\
\widetilde S\, w\\
\tilde x_3
\end{bmatrix},
\qquad
\tilde y =
\begin{bmatrix}
\tilde y_1\\
\widetilde C\, w\\
0
\end{bmatrix},
\]

\end{proof}

\begin{lemma}[Nullspace orthogonality fixes the free GSVD blocks]
\label{lem:null-orth}
Under Definition~\ref{eq:intro_gsvd},
\[
x \perp \Ker(A)
\quad\Longleftrightarrow\quad
\tilde x_3 = 0.
\]
\[
y \perp \Ker(B)
\quad\Longleftrightarrow\quad
\tilde y_1 = 0.
\]
\end{lemma}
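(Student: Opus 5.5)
The plan is to compute $\Ker(A)$ explicitly in the rotated coordinates $\tilde x = Ux$, and then use the fact that orthogonality is preserved by the orthogonal change of variables. The claim for $B$ follows by the symmetric argument.

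\textbf{Step 1: compute $\Ker(A)$.} Since $H$ is (left-)invertible, there is $\bar H$ with $\bar H H = I$, exactly as in the proof of Lemma~\ref{lem:implicit-explicit}. Hence
\[
Ax = 0 \iff HCUx = 0 \iff CUx = 0 \iff C\tilde x = 0 .
\]
From the block form \eqref{eq:cs_blocks}, $C\tilde x = (\tilde x_1,\ \widetilde C\tilde x_2,\ 0)$. Because $\widetilde C$ is diagonal with strictly positive entries, $C\tilde x = 0$ holds if and only if $\tilde x_1 = 0$ and $\tilde x_2 = 0$. Therefore
\[
\Ker(A) = U^\top \{(0,0,u) : u \in \R^t\},
\]
that is, $\Ker(A)$ is the image under $U^\top$ of the last coordinate block.

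\textbf{Step 2: transfer orthogonality.} For any $v = U^\top(0,0,u) \in \Ker(A)$, orthogonality of $U$ gives
\[
\langle x, v\rangle = \langle Ux, Uv\rangle = \langle \tilde x, (0,0,u)\rangle = \langle \tilde x_3, u\rangle .
\]
So $x \perp \Ker(A)$ if and only if $\langle \tilde x_3, u\rangle = 0$ for every $u \in \R^t$, which holds if and only if $\tilde x_3 = 0$.

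\textbf{Step 3: the statement for $B$.} The same argument applies with $S\tilde y = (0,\ \widetilde S\tilde y_2,\ \tilde y_3)$. Here $\Ker(B) = V^\top\{(u,0,0) : u\in\R^r\}$, and so $y \perp \Ker(B)$ if and only if $\tilde y_1 = 0$.

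\textbf{Main obstacle: block dimensions.} The block sizes in \eqref{eq:cs_blocks} are stated as $r+k+t = d$. However, for $C\tilde x$ to make sense, the column blocks of $C$ must partition $p$, and those of $S$ must partition $q$. I would therefore interpret the third column block of $C$ as having size $p - r - k$, and the first column block of $S$ as having size $q - k - t$. This is the convention implicitly used in Lemma~\ref{lem:implicit-explicit}, where $\tilde x_3$ and $\tilde y_1$ are free.

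With this interpretation, the key fact is unchanged: the column block of $C$ multiplying $\tilde x_3$ is identically zero, while the remaining columns of $C$ have full column rank. That is exactly what Step 1 uses. I would state this convention explicitly at the start of the proof.
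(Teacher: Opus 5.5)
Your proof is correct and follows essentially the same route as the paper's: use the left inverse $\bar H$ to reduce $\Ker(A)$ to $\Ker(CU)$, pass to rotated coordinates using the orthogonality of $U$, and read off from the block form that $\Ker(C)$ is exactly the third coordinate block, so orthogonality forces $\tilde x_3=0$ (and symmetrically $\tilde y_1=0$). Your remark that the column blocks of $C$ and $S$ must partition $p$ and $q$ is a valid correction of the paper's $r+k+t=d$ convention and does not affect the argument, but with it the free blocks have sizes $p-r-k$ and $q-k-t$, so take $u\in\mathbb{R}^{p-r-k}$ in Step~1 and $u\in\mathbb{R}^{q-k-t}$ in Step~3 rather than $\mathbb{R}^{t}$ and $\mathbb{R}^{r}$.
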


\begin{proof}
We show the claim for $A$; the proof for $B$ is analogous.
\begin{calculation}[\Longleftrightarrow]
x \perp \Ker(A)
\step{GSVD definition}
x \perp \Ker(HCU)
\step{By GSVD definition  $\exists \bar H$ such that $\bar HH = I$} 
x \perp \Ker(CU)
\step{Perpendicularity to kernel definition}
\langle x,k\rangle=0, \space \forall k \text{ s.t. } CUk=0
\step{Change of variable: $Uk=\tilde k$, so $k=U^T\tilde k$}
\langle x,U^T\tilde k\rangle=0, \space \forall \tilde k \text{ s.t. } C\tilde k=0
\step{Duality of inner product}
\langle Ux,\tilde k\rangle=0, \space \forall \tilde k \text{ s.t. } C\tilde k=0
\step{$\tilde x$ definition}
\langle \tilde x ,\tilde k\rangle=0, \space \forall \tilde k \text{ s.t. } C\tilde k=0
\end{calculation}

Expanding the definition of C we have
\[
\begin{bmatrix}
I_{r} & 0 & 0 \\
0 & \widetilde C & 0 \\
0 & 0 & 0_{t}
\end{bmatrix}
\begin{bmatrix}
\tilde k_1\\
\tilde k_2\\
\tilde k_3
\end{bmatrix} = \begin{bmatrix}
0\\
0\\
0
\end{bmatrix}
\]
which forces $\tilde k_1=0$ and $\tilde k_2=0$ while keeping $\tilde k_3$ unconstrained.
Therefore, to guarantee that $\langle x,k \rangle = 0$, we must enforce $\tilde x_3=0$ and leave $\tilde x_1$ and  $\tilde x_2$ unconstrained.
\end{proof}

\begin{theorem}[GSVD produces alignment angles]   
\label{alg:theta_proof}  Given datasets $A$ and $B$, and an observation $z$, the alignment angle  between $A$ and $B$ with respect to $z$ (cf. Definition~\ref{eq:keydefinition}) is given by
\begin{equation}
\theta(z)=\arctan\!\left(\frac{a(z)}{b(z)}\right)
=
\arctan\!\left(
\frac{\norm{C^\dagger\,c(z)}_2}{\norm{S^\dagger\,c(z)}_2}
\right),
\label{eq:theta_gsvd}
\end{equation}
where 
\begin{equation}
c(z)=H^\dagger z, 
\end{equation}
and  $(H,C,S,U,V)$ are  the GSVD factors in~\eqref{eq:intro_gsvd}.
\end{theorem}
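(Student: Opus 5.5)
The plan is to restrict to $z\in\col(A)\cap\col(B)$, where Definition~\ref{eq:keydefinition} applies, and show that $a(z)=\|x\|_2$ and $b(z)=\|y\|_2$ for the canonical pair $(x,y)$ in the definition. Taking $\arctan$ of the ratio then gives the claim.

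\textbf{Step 1: the canonical pair in GSVD coordinates.} Fix $(x,y)\in\mathcal{R}_{\text{co-span}}(z)$ with $x\perp\Ker(A)$ and $y\perp\Ker(B)$. Set $\tilde x=Ux$ and $\tilde y=Vy$. Lemma~\ref{lem:implicit-explicit} gives $\tilde x=(0,\widetilde S w,\tilde x_3)$ and $\tilde y=(\tilde y_1,\widetilde C w,0)$ for some $w\in\R^k$. Lemma~\ref{lem:null-orth} then forces $\tilde x_3=0$ and $\tilde y_1=0$, so
\[
\tilde x=(0,\widetilde S w,0),\qquad \tilde y=(0,\widetilde C w,0).
\]
Because $U$ and $V$ are orthogonal, $\|x\|_2=\|\widetilde S w\|_2$ and $\|y\|_2=\|\widetilde C w\|_2$.

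\textbf{Step 2: recover $w$ from $z$ via the shared frame.} We have
\[
z=Ax=HC\tilde x=H\,(0,\widetilde C\widetilde S w,0).
\]
Since $H$ is invertible, or left-invertible in the reduced variant, it has full column rank, so $H^\dagger H=I$. Hence
\[
c(z)=H^\dagger z=(0,\widetilde C\widetilde S w,0).
\]
This is the same block vector one obtains from $z=By=HS\tilde y$, which serves as a consistency check.

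\textbf{Step 3: apply the pseudoinverses blockwise.} From \eqref{eq:cs_blocks}, $C^\dagger$ is block diagonal with blocks $I_r$, $\widetilde C^{-1}$ and $0$. Likewise $S^\dagger$ has blocks $0$, $\widetilde S^{-1}$ and $I_t$, with the zero/identity paddings transposed to the appropriate rectangular shapes. Because $c(z)$ is supported only on the middle block, and diagonal matrices commute,
\[
C^\dagger c(z)=(0,\widetilde S w,0),\qquad S^\dagger c(z)=(0,\widetilde C w,0).
\]
Therefore $a(z)=\|\widetilde S w\|_2=\|x\|_2$ and $b(z)=\|\widetilde C w\|_2=\|y\|_2$. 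Substituting into \eqref{eq:theta_def} gives \eqref{eq:theta_gsvd}.

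\textbf{Expected obstacle.} The main difficulty is bookkeeping rather than any single hard step. First, the block sizes of the rectangular $C\in\R^{d\times p}$ and $S\in\R^{d\times q}$ must be matched against $\tilde x\in\R^p$ and $\tilde y\in\R^q$. Second, we need that the Moore--Penrose pseudoinverse of such a padded diagonal matrix acts as $\widetilde C^{-1}$ on the middle block and annihilates the zero-padded block. I would handle both by writing $C^\dagger$ explicitly as the transpose-shaped diagonal matrix with reciprocals of the nonzero entries.

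I would also note the degenerate cases. If $z=0$, then $w=0$ and the ratio is $0/0$, so a convention is needed. Since $\widetilde C$ and $\widetilde S$ are strictly positive, $w\neq0$ gives $a,b>0$, and the formula is well defined on the intersection away from $0$. For general $z\in\R^d$, the identity $c(z)=H^\dagger z$ only extends the formula as a definition.
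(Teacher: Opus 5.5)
Your proof is correct and follows essentially the same route as the paper. Both arguments pass to GSVD coordinates using $H^\dagger H=I$ and the orthogonality of $U,V$, use Lemmas~\ref{lem:implicit-explicit} and~\ref{lem:null-orth} to kill every block except the middle one, and then read off $\|x\|_2=\|C^\dagger c(z)\|_2$ and $\|y\|_2=\|S^\dagger c(z)\|_2$ blockwise; the differences are cosmetic (you start from the canonical pair and compute $c(z)=(0,\widetilde C\widetilde S w,0)$ via the $w$-parametrization, whereas the paper starts from $c$ and solves $\tilde x_2=\widetilde C^{-1}c_2$, $\tilde y_2=\widetilde S^{-1}c_2$), and your explicit restriction to $z\in\col(A)\cap\col(B)$ with $z\neq 0$ makes precise a domain assumption the paper leaves implicit.
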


\begin{proof}
For a given vector $z \in \mathbb{R}^d$, we express $z$ in the shared GSVD frame
by computing its coordinates with respect to $\operatorname{col}(H)$:
\begin{equation}
\argmin_{c \in \mathbb{R}^d}
\;\|Hc - z\|_2^2,
\qquad\text{so that}\qquad
c = H^\dagger z.
\end{equation}
This least-squares projection provides the natural notion of GSVD-frame coordinates
when $z$ is noisy or does not lie exactly in $\operatorname{col}(H)$; when
$z \in \operatorname{col}(H)$, the equality $z = Hc$ holds exactly. Under the GSVD parametrization, the co-span constraint $Ax = By = z$ implies the
following chain of equivalences:
{\setlength{\jot}{0.5pt}%
 \setlength{\abovedisplayskip}{1pt}%
 \setlength{\belowdisplayskip}{1pt}%
 \setlength{\abovedisplayshortskip}{0pt}%
 \setlength{\belowdisplayshortskip}{0pt}%
\begin{calculation}[\Longleftrightarrow]
Ax = By = z
\step{$A=HCU,\ B=HSV,\ z=Hc$}
(HCU)x = (HSV)y = Hc
\step{By GSVD definition $\exists\bar H$ such that $\bar HH=I$}
CUx = SVy = c
\step{Change of variables: $\tilde x:=Ux,\ \tilde y:=Vy$}
C\tilde x = S\tilde y = c
\step{By Lemma  (~\ref{lem:implicit-explicit})}
\begin{bmatrix}
\tilde x_1\\
\widetilde C\,\tilde x_2\\
0
\end{bmatrix} = \begin{bmatrix}
0\\
\widetilde S\,\tilde y_2\\
\tilde{y}_3
\end{bmatrix} = \begin{bmatrix}c_1 \\ c_2 \\ c_3\end{bmatrix} \qquad \text{with $\tilde x_3$ and $\tilde y_1$ free}
\step{The equality tells us that $\tilde{x}_1 = c_1 =0$ and $\tilde y_3 = c_3 =  0$}
\begin{bmatrix}
0\\
\widetilde C\,\tilde x_2\\
0
\end{bmatrix} = \begin{bmatrix}
0\\
\widetilde S\,\tilde y_2\\
0
\end{bmatrix} = \begin{bmatrix}0 \\ c_2 \\ 0\end{bmatrix} \qquad \text{with $\tilde x_3$ and $\tilde y_1$ free}
\step{Using Lemma ~\ref{lem:null-orth}}
\begin{bmatrix}
0\\
\widetilde C\,\tilde x_2\\
0
\end{bmatrix} = \begin{bmatrix}
0\\
\widetilde S\,\tilde y_2\\
0
\end{bmatrix} = \begin{bmatrix}0 \\ c_2 \\ 0\end{bmatrix} 
\step{Focus on the middle term}
\widetilde C\,\tilde x_2 = \widetilde S\,\tilde y_2 =c_2 
\step{Isolating expressions}
\widetilde C\,\tilde x_2 = c_2 \qquad \widetilde S\,\tilde y_2 =c_2
\step{Solving for $\tilde x_2$ and $\tilde y_2$}
\tilde x_2=\widetilde C^{-1}c_2,\quad \tilde y_2=\widetilde S^{-1}c_2
\step{Now consider the ratio expression}
\frac{\|x\|_2}{\|y\|_2}
\step[=]{$U,V$ orthogonal}
\frac{\|\tilde x\|_2}{\|\tilde y\|_2}
\step[=]{Norm of middle term}
\frac{\|\widetilde C^{-1}c_2\|_2}{\|\widetilde S^{-1}c_2\|_2}
\step[=]{Accounting for all blocks}
\frac{\|C^{\dagger}c\|_2}{\|\widetilde S^{\dagger}c\|_2}
\end{calculation}
Because the boundary entries of \(c\) are constrained to be zero, the resulting system admits multiple matrices that satisfy the final condition. To ensure a unique solution, we therefore employ the Moore–Penrose pseudoinverse.
} \end{proof}

Next, we consider the problem of finding extreme directions. 
Recall that the direction  $z$ that  maximizes the alignment angle $\theta(z)$ is given by Theorem~\ref{thm:ratio-max},
\[
\argmax_{x,y}\ \arctan \left(\frac{\|x\|_2^2}{\|y\|_2^2}\right)
\quad\text{s.t.}\quad
Ax=By,\;\;  (x,y) \in\mathcal{R}_{\text{co-span}}(z), \;\; x\perp \Ker(A),\;\; y\perp \Ker(B).
\]

\begin{theorem}[GSVD produces extreme directions]
Given datasets $A$ and $B$, the extreme directions $z_{\textrm{max}}$ and $z_{\textrm{min}}$
that maximize and minimize the alignment angle $\theta(z)$, respectively, are unique and given by
$z_{\textrm{max}}=Ax_{\textrm{max}}=By_{\textrm{max}}$ and
$z_{\textrm{min}}=Ax_{\textrm{min}}=By_{\textrm{min}}$, where
$z_{\textrm{max}} = h_{r+k}$, $z_{\textrm{min}} = h_{r+1}$ and
\[
x_{\textrm{max}} =
U^T\begin{bmatrix}
0\\
\widetilde S\widetilde C^{-1}e_{k}\\
0
\end{bmatrix},
\quad
y_{\textrm{max}} =
V^T\begin{bmatrix}
0\\
e_{k}\\
0
\end{bmatrix},
\quad
x_{\textrm{min}} =
U^T\begin{bmatrix}
0\\
\widetilde C\widetilde S^{-1}e_{1}\\
0
\end{bmatrix},
\quad
y_{\textrm{min}} =
V^T\begin{bmatrix}
0\\
 e_{1}\\
0
\end{bmatrix}.
\]
$(H,C,S,U,V)$ are the GSVD factors in~\eqref{eq:intro_gsvd}, $e_i$ is the $i$-th canonical basis vector,
i.e., $(e_i)_j = 1$ if $j=i$ and $(e_i)_j = 0$ otherwise, and $h_i$ is the $i$-th column of matrix $H$.
\end{theorem}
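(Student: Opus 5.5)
By Lemma~\ref{lem:implicit-explicit} and Lemma~\ref{lem:null-orth}, every admissible pair $(x,y)$ with $Ax=By=z$, $x\perp\Ker(A)$, $y\perp\Ker(B)$ has the form
\[
\tilde x=Ux=\begin{bmatrix}0\\ \widetilde S w\\ 0\end{bmatrix},\qquad
\tilde y=Vy=\begin{bmatrix}0\\ \widetilde C w\\ 0\end{bmatrix},\qquad w\in\R^k .
\]
The plan is to reduce the optimization in~\eqref{eq:optmproblem} to a generalized Rayleigh quotient in $w$. Since $U,V$ are orthogonal, $\|x\|_2=\|\widetilde S w\|_2$ and $\|y\|_2=\|\widetilde C w\|_2$. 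Because $\arctan$ is strictly increasing, maximizing or minimizing $\theta$ is equivalent to maximizing or minimizing
\[
\rho(w)=\frac{\|\widetilde S w\|_2^2}{\|\widetilde C w\|_2^2}=\frac{\sum_{i=1}^k \tilde s_i^2 w_i^2}{\sum_{i=1}^k \tilde c_i^2 w_i^2},\qquad w\neq 0.
\]
The ratio is scale-invariant in $w$, so the normalization $\|z\|_2=1$ in $\mathcal Z$ only fixes a scale. The corresponding ambient vector is $z=HC\tilde x=H[0;\widetilde C\widetilde S w;0]$.

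Writing $\lambda_i=\tilde s_i^2/\tilde c_i^2$, we get $\rho(w)=\sum_i \lambda_i\,\tilde c_i^2w_i^2/\sum_i \tilde c_i^2 w_i^2$. This is a convex combination of the $\lambda_i$ with weights $\tilde c_i^2w_i^2\ge 0$, so $\lambda_1\le\rho(w)\le\lambda_k$. By the orderings $c_{11}\ge\dots$ and $s_{11}\le\dots$, the $\lambda_i$ are nondecreasing. The maximum is therefore attained when $w$ is supported on indices with $\lambda_i=\lambda_k$, in particular at $w\propto e_k$, and the minimum at $w\propto e_1$. With the scaling $w=\widetilde C^{-1}e_k$ we obtain $\tilde y_2=e_k$ and $\tilde x_2=\widetilde S\widetilde C^{-1}e_k$, which are exactly $x_{\max},y_{\max}$ after applying $U^T,V^T$. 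Then $z_{\max}=By_{\max}=HS\,[0;e_k;0]=\tilde s_k h_{r+k}$, which is $h_{r+k}$ up to the positive factor $\tilde s_k$. The minimizer is handled symmetrically with $w=\widetilde S^{-1}e_1$, giving $z_{\min}=\tilde c_1 h_{r+1}$.

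The main obstacle is the \emph{uniqueness} claim together with the exact identification $z=h_{r+k}$ rather than a multiple of it. Both points should be stated up to positive scaling, which is natural because $\theta$ is scale-invariant; if $\mathcal Z$ imposes a normalization, it fixes the scale. Uniqueness also needs strict inequalities $\lambda_{k-1}<\lambda_k$ and $\lambda_1<\lambda_2$, i.e.\ simple extreme generalized singular values. Under ties, every $w$ supported on the tied block is optimal, so the maximizer is a subspace of the corresponding columns of $H$. I would add this genericity assumption explicitly and note the tied case as a remark.

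Finally, I would check that the optimum lies among relational samples $z\in\col(A)\cap\col(B)$, which holds by construction. Indices in the $r$ and $t$ blocks give infinite ratios or are non-relational, since there $x$ or $y$ cannot represent $z$; they are excluded by Lemma~\ref{lem:implicit-explicit}, which forces $\tilde x_1=0$ and $\tilde y_3=0$.
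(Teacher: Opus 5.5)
Your argument is correct and follows the paper's route. Lemmas~\ref{lem:implicit-explicit} and~\ref{lem:null-orth} reduce everything to $\tilde x=(0,\widetilde S w,0)$, $\tilde y=(0,\widetilde C w,0)$. Your convex combination of the $\lambda_i$ with weights $\tilde c_i^2 w_i^2$ is the paper's Rayleigh quotient for $\widetilde S\widetilde C^{-1}$, written out after the substitution $\tilde w=\widetilde C w$. Your version has one advantage: it identifies the whole optimal set ($w$ supported on $\{i:\lambda_i=\lambda_k\}$), which is what your uniqueness caveat needs. The paper's proof asserts uniqueness without argument and treats only the maximizer. Add sign to that caveat: $\theta(\alpha z)=\theta(z)$ for every $\alpha\neq 0$, so the constraint $\|z\|_2=1$ still leaves both $\pm h_{r+k}/\|h_{r+k}\|_2$.

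Do not leave the minimizer at ``handled symmetrically'', because your pair is not the one in the statement. With $w=\widetilde S^{-1}e_1$ you get $Ux=(0,e_1,0)$ and $Vy=(0,\widetilde C\widetilde S^{-1}e_1,0)$. The theorem instead puts $\widetilde C\widetilde S^{-1}e_1$ on the $x$ side and $e_1$ on the $y$ side, and that pair is not admissible. It gives $Ax_{\min}=(\tilde c_1^2/\tilde s_1)\,h_{r+1}$ but $By_{\min}=\tilde s_1\,h_{r+1}$, which coincide only if $\tilde c_1=\tilde s_1$. Its norm ratio $\tilde c_1/\tilde s_1$ is also the reciprocal of the true minimum $\tilde s_1/\tilde c_1$: the statement swaps $\widetilde C\leftrightarrow\widetilde S$ without swapping $x\leftrightarrow y$. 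So this part of the theorem cannot be proved as written, and you should state the correction explicitly. One valid form is $x_{\min}=U^T(0,\widetilde S\widetilde C^{-1}e_1,0)$, $y_{\min}=V^T(0,e_1,0)$, with $z_{\min}=\tilde s_1 h_{r+1}$; your pair is this one scaled by $\tilde c_1/\tilde s_1$. List it alongside your corrections on uniqueness and on $z_{\max}=\tilde s_k h_{r+k}$ versus $h_{r+k}$.
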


\begin{proof}
Since $\arctan$ is strictly monotonic increasing, we can focus on finding $\argmax_{x,y}\  \frac{\|x\|_2^2}{\|y\|_2^2}$ with the same restraints.

By Lemma~\ref{lem:implicit-explicit}, $Ax=By$ implies
$\tilde x_1=0$, $\tilde y_3=0$, and $\tilde x_2=\widetilde S w$, $\tilde y_2=\widetilde C w$ for some $w$,
with $\tilde x_3$ and $\tilde y_1$ free.

By Lemma~\ref{lem:null-orth}, the additional constraints $x\perp\Ker(A)$ and $y\perp\Ker(B)$ force
$\tilde x_3=0$ and $\tilde y_1=0$.

Thus only the middle blocks remain, i.e., $\tilde x=(0,\widetilde S w,0)$ and $\tilde y=(0,\widetilde C w,0)$.

Since $U$ and $V$ are orthogonal, $\|x\|_2=\|\tilde x\|_2$ and $\|y\|_2=\|\tilde y\|_2$.
Therefore
\[
\frac{\|x\|_2^2}{\|y\|_2^2}=\frac{\|\widetilde S w\|_2^2}{\|\widetilde C w\|_2^2}.
\]
As defined in \eqref{def:gsvd_block} the diagonal entries of the middle matrices are strictly positive, so they have inverses. Now we denote $\tilde w=\widetilde C w$ and thus $w =\widetilde C^{-1}\tilde w$:
\[
\frac{\|\widetilde S \widetilde C^{-1}\tilde w\|_2^2}{\|\widetilde C \widetilde C^{-1}\tilde w\|_2^2} = 
\frac{\|\widetilde S \widetilde C^{-1}\tilde w\|_2^2}{\|\tilde w\|_2^2} 
\]

The ratio becomes a Rayleigh quotient for the diagonal operator $\widetilde S\widetilde C^{-1}$, hence is maximized by concentrating all mass on the index of its largest diagonal entry, i.e., $\tilde w=e_k$.

Reconstruction of $(x,y)$ follows from $\tilde x=Ux$ and $\tilde y=Vy$.

Finally, to uncover the desired direction we use substitution on $Ax=By$:
\[
By = HSV(V^\top \tilde y)=HS\tilde y
=H\begin{bmatrix}0\\ \widetilde S e_k\\ 0\end{bmatrix} = \begin{bmatrix}\widetilde S e_k\\ 0\end{bmatrix} = \tilde s_k H\begin{bmatrix}0\\  e_k\\ 0\end{bmatrix} = \tilde s_k\cdot h_{r+k}
\]
and
\[
Ax = HCU(U^\top \tilde x)=HC\tilde x
=H\begin{bmatrix}0\\ \widetilde C(\widetilde S\widetilde C^{-1} e_r)\\ 0\end{bmatrix}
=H\begin{bmatrix}0\\ \widetilde S e_k\\ 0\end{bmatrix} = \tilde c_k\tilde s_k \frac{1}{\tilde c_k}H\begin{bmatrix}0\\  e_k\\ 0\end{bmatrix} = \tilde s_k\cdot h_{r+k}
\]
Note that as $\widetilde S, \widetilde C, \widetilde C^{-1}$ are diagonal and $e_k$ is a canonical basis vector, the operation is simply a multiplication by the scalars $\tilde s_k, \tilde c_k, \frac{1}{\tilde c_k}$.

We can then normalize the vector $h_{r+k}$.
\end{proof}
\begin{remark} \textbf{Practical notes on numerical stability. }
In practice, we compute $C^\dagger$ and $S^\dagger$ by inverting diagonal entries above a tolerance (or via truncated SVD / Tikhonov regularization). We also recommend reporting sensitivity of $\theta$ to rank choices $(p,q)$ and preprocessing (centering/normalization).
\end{remark}


\section{MNIST Fashion: angle histograms under the same GSVD setup}
\label{app:fashion_mnist}

We replicate the MNIST-digit protocol of Section~\ref{sec:exp} on Fashion-MNIST, keeping the
\emph{same preprocessing} (mean centering in the ambient pixel space), the \emph{same GSVD pipeline},
and the \emph{same rank choices} $(p,q)$ and train/test split logic used in the digit experiments. As before, we construct $A\in\mathbb{R}^{d\times p}$ and $B\in\mathbb{R}^{d\times q}$ by stacking
(class-specific) training images as columns (after preprocessing), compute the GSVD factors, and evaluate
$\theta(z)$ for each test sample $z$ from the two classes.

Figure~\ref{fig:fashion_mnist_examples} illustrates representative Fashion-MNIST samples used in our GSVD pipeline, showing both raw images and their corresponding vectors after mean-centering and vectorization.

Figure~\ref{fig:theta_distributions_fashion} reports empirical histograms of $\theta$ for two representative class pairs. 
In Fig.~\ref{fig:theta_distributions_fashion}(a) (\texttt{T-shirt/top} vs.\ \texttt{Sneaker}), the two distributions are well separated, indicating limited shared structure between upper-body garments and footwear under this linear subspace view.
In contrast, Fig.~\ref{fig:theta_distributions_fashion}(b) (\texttt{Sneaker} vs.\ \texttt{Ankle boot}) exhibits substantial overlap, consistent with the fact that both classes correspond to visually related types of shoes and therefore share more dominant directions.

These results qualitatively mirror the behavior observed on MNIST digits: class pairs with stronger visual or semantic similarity tend to produce overlapping angle distributions, whereas more distinct pairs yield clearer separation. 
Importantly, no dataset-specific tuning is introduced for Fashion-MNIST, suggesting that the angle score reflects intrinsic geometric relations between class-conditioned subspaces rather than artifacts of hyperparameter choices.

\begin{figure}[h]
\centering
\setlength{\tabcolsep}{2pt}
\begin{tabular}{cccc}
\safeincludegraphics[height=2.4cm]{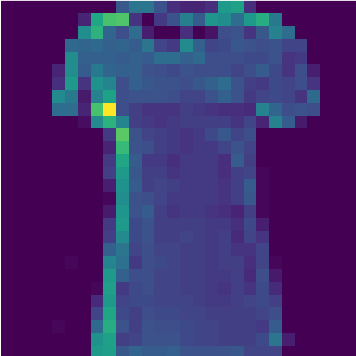} &
\safeincludegraphics[height=2.4cm]{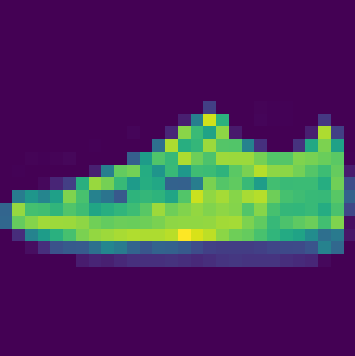} &
\safeincludegraphics[height=2.4cm]{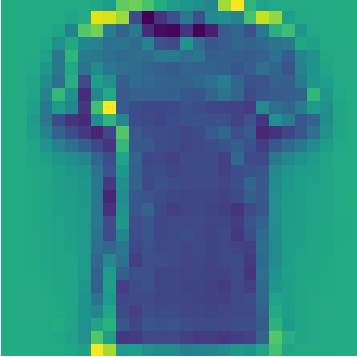} &
\safeincludegraphics[height=2.4cm]{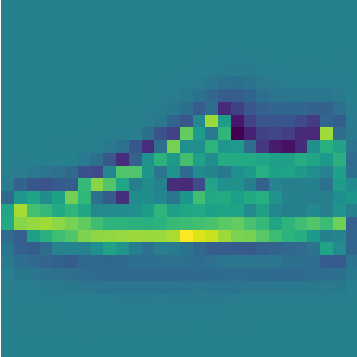} \\
(a) & (b) & (c) & (d)
\end{tabular}
\caption{Representative MNIST Fashion samples for the GSVD pipeline. (a,b) raw ``\texttt{T-shirt}'' and ``\texttt{Sneakers}''; (c,d) corresponding vectors after mean-centering difference.}
\label{fig:fashion_mnist_examples}
\end{figure}

\begin{figure}[H]
\centering
\begin{tabular}{cc}
\subfloat[]{
    \includegraphics[width=0.45\textwidth]{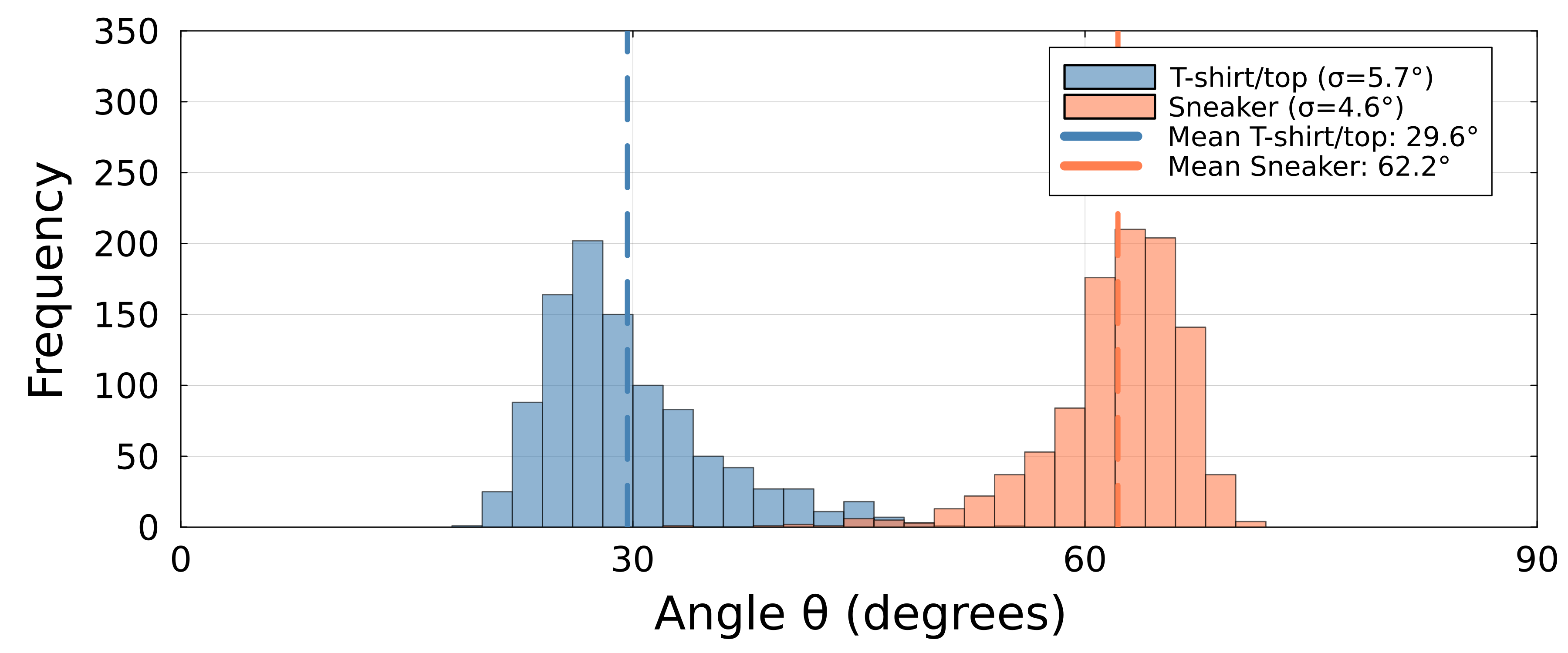}
}
&
\subfloat[]{
    \includegraphics[width=0.45\textwidth]{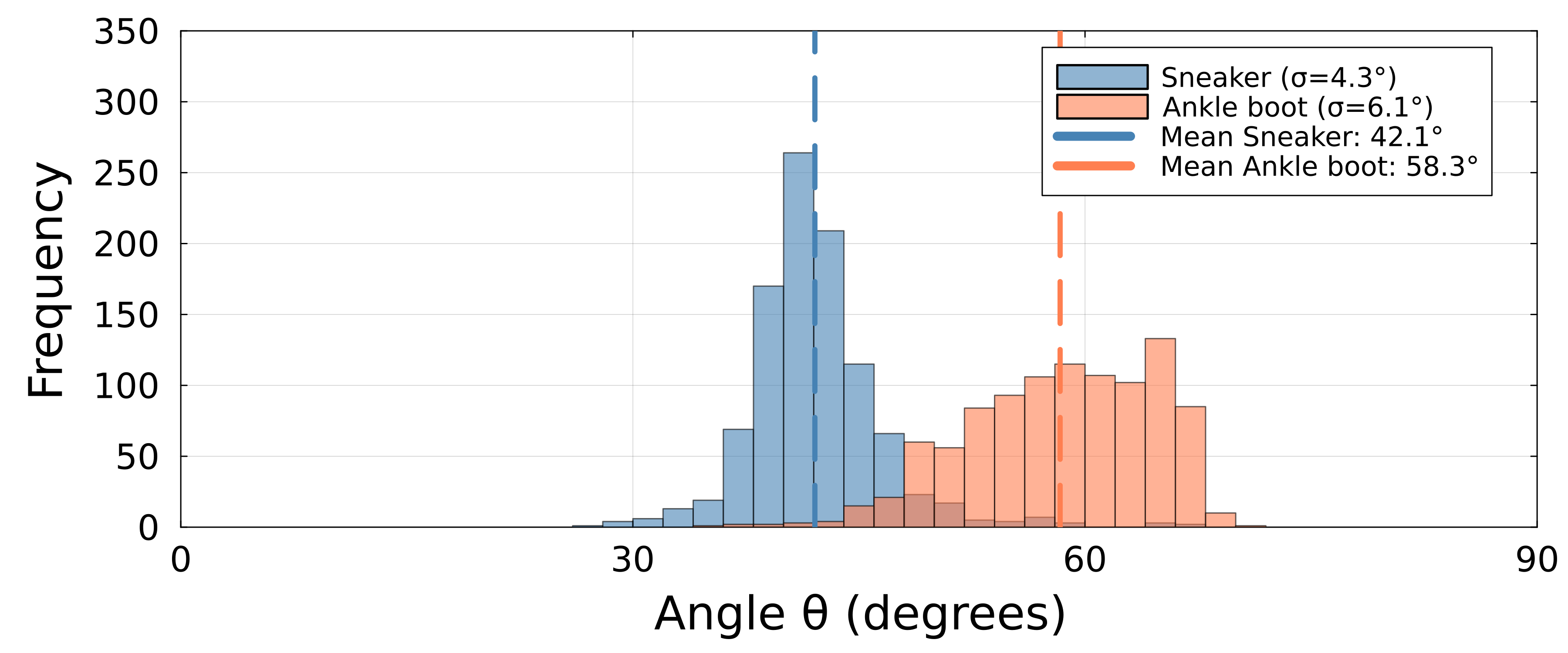}
}
\end{tabular}
\caption{Empirical distributions of the angle $\theta$ on the test set for two Fashion-MNIST class pairs under the same GSVD pipeline used for MNIST digits (same preprocessing and the same rank choices $(p,q)$).
(a) \texttt{T-shirt/top} (class 0) vs.\ \texttt{Sneaker} (class 7), where $A$ is built from \texttt{T-shirt/top} columns and $B$ from \texttt{Sneaker} columns; the separation indicates limited shared structure.
(b) \texttt{Sneaker} (class 7) vs.\ \texttt{Ankle boot} (class 9), which shows larger overlap, consistent with more shared directions between these visually related footwear classes.
As in the main text, angles near $0$ indicate stronger alignment with $A$ and angles near $90^\circ$ indicate stronger alignment with $B$.}
\label{fig:theta_distributions_fashion}
\end{figure}

\section{Global separation via Fisher--Rao distance}
\label{sec:fisher_rao}

Next, we report additional results  on the separation via Fisher--Rao distance presented in Section~\ref{sec:exp}.  

The per-sample angle $\theta(z)$ characterizes local alignment, while the class-conditional distributions $p(\theta \mid A)$ and $p(\theta \mid B)$ capture \emph{global} geometric separation between datasets. We quantify this separation by measuring the discrepancy between the two $\theta$-histograms with the Fisher–Rao distance on the probability simplex.

Table~\ref{tab:fr_mnist_pairs} reports Fisher–Rao distances (in radians) between class-conditional $\theta$-histograms for several MNIST digit pairs, using 30 uniformly spaced bins over $[0^\circ,90^\circ]$. Larger Fisher–Rao distances indicate less histogram overlap and thus stronger geometric separability.

The results agree with visual intuition and the underlying angle histograms. Pairs such as $(0,7)$ and $(1,5)$ achieve the largest distances, reflecting minimal overlap between their $\theta$ distributions and showing that each digit is, on average, represented much more efficiently by its own class-specific subspace than by the other class’s. In contrast, $(4,9)$ yields a much smaller distance, indicating greater overlap and hence higher geometric ambiguity, consistent with the well-known visual similarity between handwritten 4s and 9s.

\begin{table}[h]
\centering
\caption{Fisher--Rao distance (in radians) between the class-conditional $\theta$ histograms for selected MNIST digit pairs (30 bins over $[0^\circ,90^\circ]$).}
\setlength{\tabcolsep}{10pt}
\begin{tabular}{@{}cc@{}}
\toprule
Digit pair $(A,B)$ & Fisher--Rao distance (rad) \\
\midrule
$(0,7)$ & 2.773465 \\
$(1,5)$ & 2.731825 \\
$(3,9)$ & 2.665724 \\
$(4,9)$ & 2.336575 \\
\bottomrule
\end{tabular}
\label{tab:fr_mnist_pairs}
\end{table}

Figure~\ref{fig:posteriori_distributions} illustrates the posterior uncertainty described in Section~\ref{sec:fisherraodist}.
Using 1600 images across both digits, the x-axis is discretized into equally spaced angle bins, while the y-axis represents the posterior class probabilities defined in \eqref{eq:posty}. As developed throughout this paper, the posterior distributions reflect the principle that smaller alignment angles are associated with instances from $A$ while larger angles are associated with instances from $B$ intermediate angles capture regions of mutual coherence between the two classes. In panels (a), (b), and (d), this expected structure is clearly observed, with dominant color regions corresponding to their respective angle ranges, even though certain bins are not populated by samples in the test set. In panel (c), an outlier is observed: a single instance from class$B$ is assigned a low alignment angle. Aside from this isolated case, the distribution conforms to the established interpretation.

\begin{figure}[h]
\centering
\begin{subfigure}[t]{0.48\textwidth}
  \centering
  \safeincludegraphics[width=\linewidth]{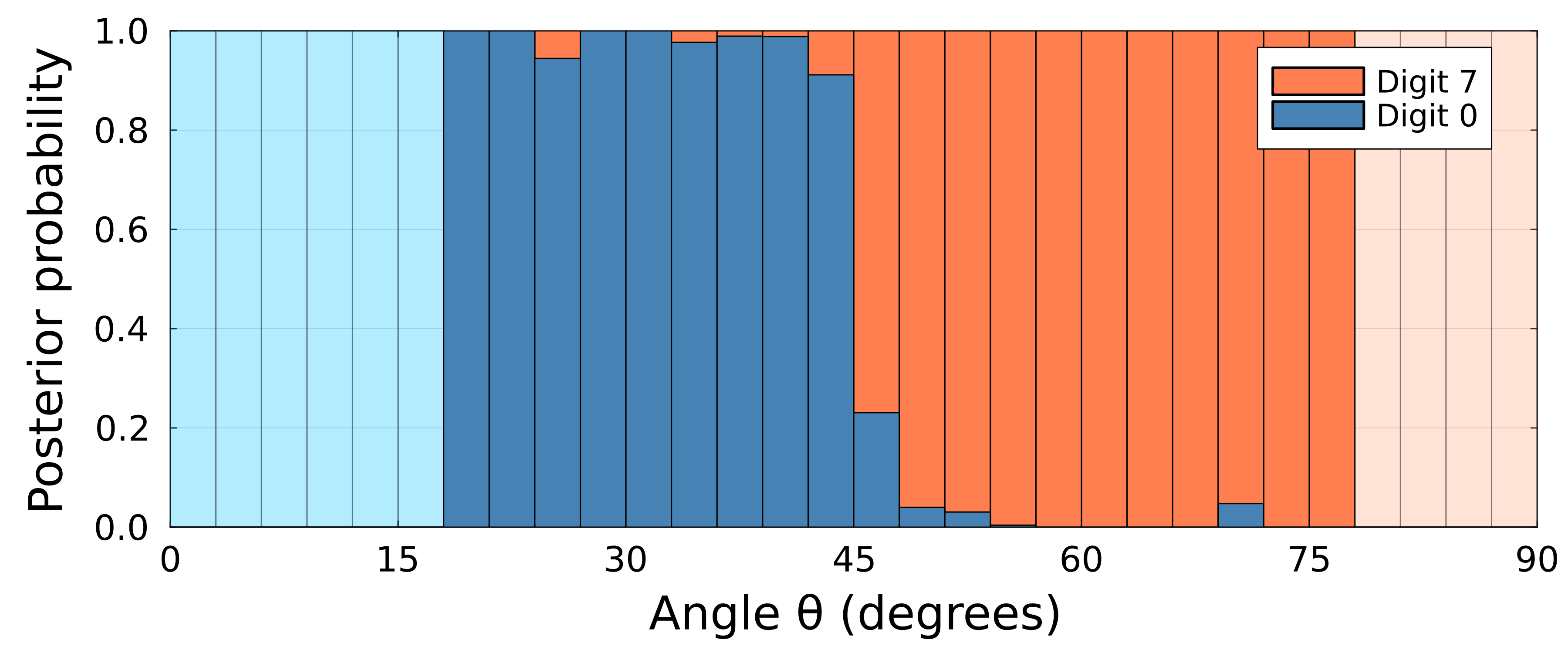}
  \caption{Digits ``0'' vs ``7''}
\end{subfigure}\hfill
\begin{subfigure}[t]{0.48\textwidth}
  \centering
  \safeincludegraphics[width=\linewidth]{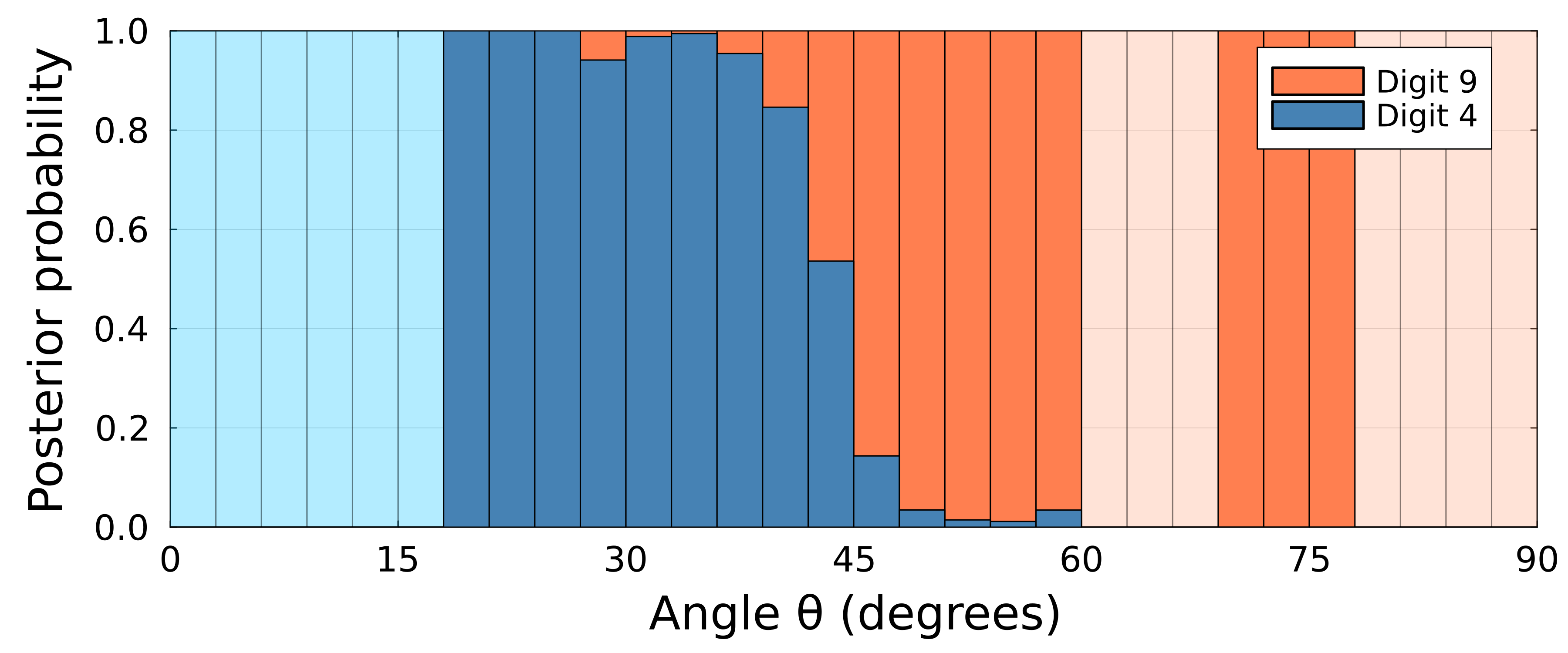}
  \caption{Digits ``4'' vs ``9''}
\end{subfigure}

\vspace{0.6em}

\begin{subfigure}[t]{0.48\textwidth}
  \centering
  \safeincludegraphics[width=\linewidth]{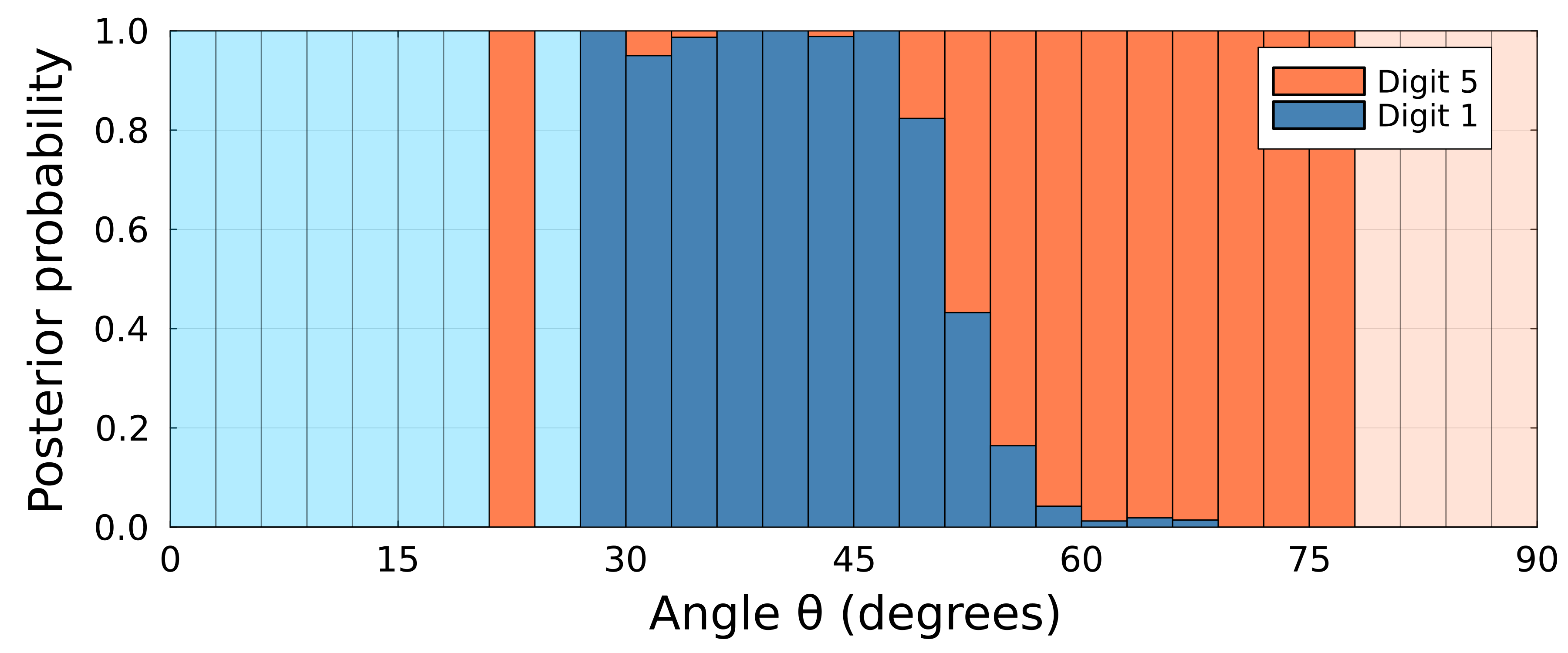}
  \caption{Digits ``1'' vs ``5''}
\end{subfigure}\hfill
\begin{subfigure}[t]{0.48\textwidth}
  \centering
  \safeincludegraphics[width=\linewidth]{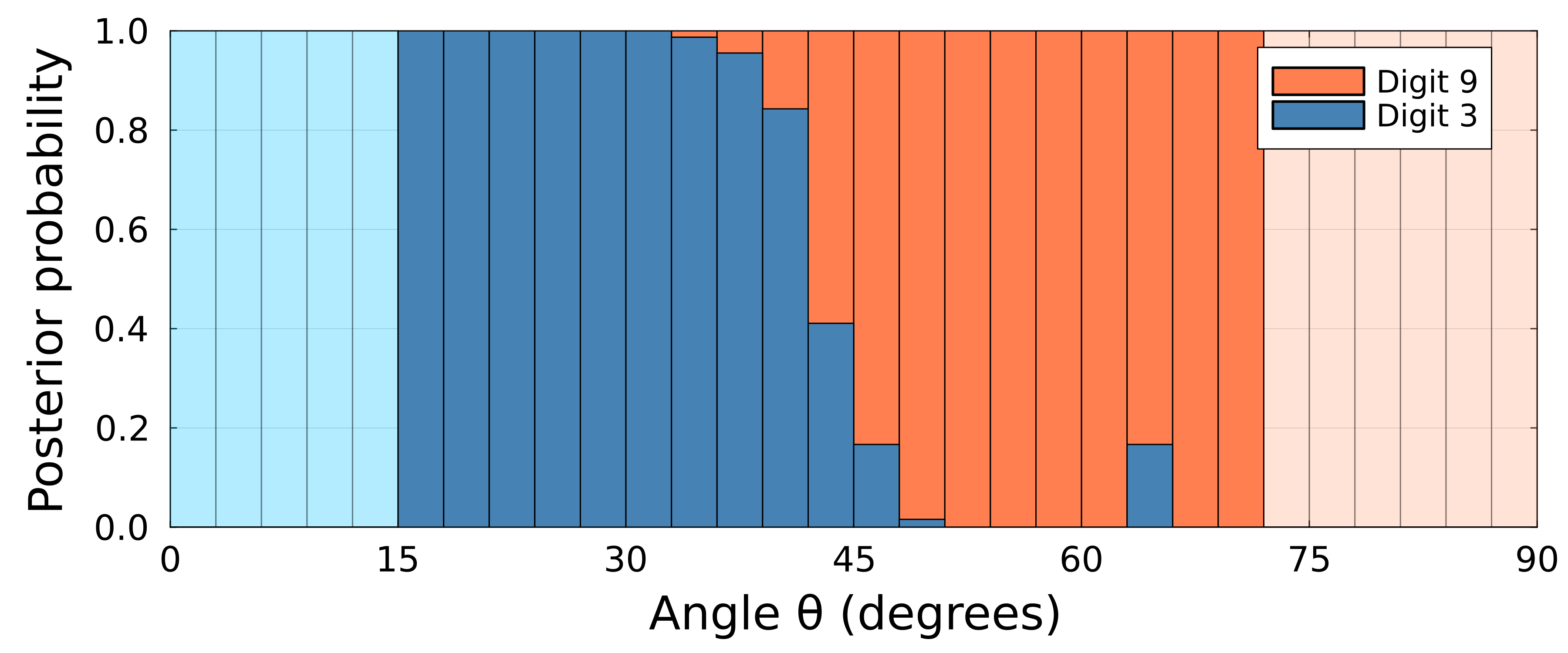}
  \caption{Digits ``3'' vs ``9''}
\end{subfigure}

\caption{Posterior distributions across discretized angle bins for four digit pairs in the MNIST test set.
Angles near $0$ reflect stronger alignment with the first digit in the pair, whereas angles near $90$ reflect stronger alignment with the second. Translucent bins indicate angle ranges for which no samples were observed.}
\label{fig:posteriori_distributions}
\end{figure}

\section{Hellinger vs.\ Fisher--Rao (Bhattacharyya) geometry}
\label{app:hellinger_fr}

This appendix relates the (squared) Hellinger distance between discrete distributions to the
Fisher--Rao distance on the probability simplex. The key bridge is the
\emph{Bhattacharyya coefficient}, i.e., the inner product of square-root densities.
 
Let $P=(P_i)_{i=1}^m$ and $Q=(Q_i)_{i=1}^m$ be probability vectors on $m$ bins, with
$P_i\ge 0$, $Q_i\ge 0$, and $\sum_i P_i=\sum_i Q_i=1$. Define the square-root embedding
\begin{equation}
\psi(P) := \big(\sqrt{P_1},\ldots,\sqrt{P_m}\big)\in \mathbb{R}^m,
\qquad \|\psi(P)\|_2=1,
\label{eq:app_sqrt_embedding}
\end{equation}
and similarly $\psi(Q)$. Recall that the  \emph{Bhattacharyya coefficient} is
\begin{equation}
BC(P,Q) := \langle \psi(P),\psi(Q)\rangle = \sum_{i=1}^m \sqrt{P_iQ_i}\in[0,1].
\label{eq:app_bc}
\end{equation}

\subsection{Hellinger distance as Euclidean distance in the square-root embedding}
The (squared) Hellinger distance is
\begin{equation}
H^2(P,Q)
:= \frac12 \sum_{i=1}^m \big(\sqrt{P_i}-\sqrt{Q_i}\big)^2
= \frac12 \|\psi(P)-\psi(Q)\|_2^2.
\label{eq:app_hellinger_def}
\end{equation}
Expanding the squared norm and using $\sum_i P_i=\sum_i Q_i=1$ gives
\begin{align}
\|\psi(P)-\psi(Q)\|_2^2
&= \|\psi(P)\|_2^2 + \|\psi(Q)\|_2^2 - 2\langle \psi(P),\psi(Q)\rangle \nonumber\\
&= 1+1-2\,BC(P,Q) \nonumber\\
&= 2\big(1-BC(P,Q)\big).
\end{align}
Plugging into \eqref{eq:app_hellinger_def} yields the identity
\begin{equation}
H^2(P,Q) = 1 - BC(P,Q) = 1 - \sum_{i=1}^m \sqrt{P_iQ_i}.
\label{eq:app_hellinger_bc}
\end{equation}

\subsection{Fisher--Rao distance as spherical geodesic distance}
Recall from Section~\ref{sec:fisherraodist} that  under the square-root embedding, the Fisher--Rao distance on the probability simplex admits
the closed form
\begin{equation}
d_{\mathrm{FR}}(P,Q)
= 2\acos\!\left(\sum_{i=1}^m \sqrt{P_iQ_i}\right)
= 2\acos\!\big(BC(P,Q)\big).  
\label{eq:app_fr_bc}
\end{equation}
Geometrically, $d_{\mathrm{FR}}(P,Q)$ is twice the angle between the unit vectors
$\psi(P)$ and $\psi(Q)$ on the sphere.

Combining \eqref{eq:app_hellinger_bc} and \eqref{eq:app_fr_bc} yields a direct conversion   between Fisher--Rao and Hellinger distances. 
For probability vectors $P,Q$ on $m$ bins,
\begin{equation}
d_{\mathrm{FR}}(P,Q) = 2\acos\!\big(1 - H^2(P,Q)\big).
\label{eq:app_fr_hellinger}
\end{equation}
Equivalently,
\begin{equation}
H^2(P,Q) = 1 - \cos\!\left(\frac{d_{\mathrm{FR}}(P,Q)}{2}\right).
\label{eq:app_hellinger_fr}
\end{equation}

 Indeed, 
by \eqref{eq:app_hellinger_bc}, $BC(P,Q)=1-H^2(P,Q)$. Substituting into
\eqref{eq:app_fr_bc} gives \eqref{eq:app_fr_hellinger}. Rearranging yields~\eqref{eq:app_hellinger_fr}.

\paragraph{Relevance to $\theta$-histograms.}
When $P$ and $Q$ are the normalized histograms estimating $p(\theta\mid A)$ and $p(\theta\mid B)$,
the Bhattacharyya coefficient $BC(P,Q)$ measures histogram overlap in the square-root geometry,
the Hellinger distance measures the corresponding Euclidean separation on the sphere (up to the
$\tfrac{1}{\sqrt{2}}$ factor), and Fisher--Rao measures the intrinsic spherical geodesic distance.
Eq.~\eqref{eq:app_fr_bc}  shows these are equivalent up to a monotone transform.

\end{document}